\documentclass[9pt]{article} 
\usepackage{hyperref}
\usepackage{url}
\usepackage{smile}
\usepackage{graphicx} 
\usepackage{algorithm}
\usepackage{algorithmic}
\usepackage{epstopdf}
\usepackage[margin=1.0in]{geometry}
\usepackage[export]{adjustbox}
\usepackage{mathtools, cuted}
\IfFileExists{bbm.sty}{\usepackage{bbm}}{}
\usepackage{wrapfig}
\usepackage{subcaption}
\usepackage{caption}
\usepackage{enumitem}
\usepackage{tabularx}
\usepackage{tcolorbox}
\usepackage{placeins}
\usepackage{float}
\usepackage{longtable}


\numberwithin{equation}{section}

\usepackage[nocompress]{cite}

\hypersetup{
    colorlinks=true,
    linkcolor=blue,
    anchorcolor=blue,
    citecolor=blue,
}

\allowdisplaybreaks[4]

\let\oldfrac\frac
\renewcommand{\frac}[2]{\tfrac{#1}{#2}}

\newcommand{\Dcal}{\cD_{\mathrm{cal}}}
\newcommand{\Dwide}{\cD_{\mathrm{wide}}}
\newcommand{\Dfit}{\cD_{\mathrm{fit}}}
\newcommand{\Qnom}{Q_{\mathrm{nom}}}
\newcommand{\rhotr}{\rho_{\mathrm{tr}}}
\newcommand{\on}{\mathrm{on}}

\title{Robust Reinforcement Learning for Dynamic Treatment Regimes under Model and Measurement Uncertainty}
\author{
Chengze Xie\thanks{Department of Industrial and Systems Engineering,
Texas A\&M University. (E-mail: \url{changze@tamu.edu}).}
\and
Xuan Wang\thanks{Department of Mechanical Engineering,
The University of Texas at Austin. (E-mail: \url{xuan.wang@austin.utexas.edu}).}
\and
Ying Ding\thanks{School of Information,
The University of Texas at Austin. (E-mail: \url{ying.ding@ischool.utexas.edu}).}
\and
Hairong Wang\thanks{Department of Mechanical Engineering,
The University of Texas at Austin. (E-mail: \url{hairong.wang@austin.utexas.edu}).}
\and
Yan Li\thanks{Department of Industrial and Systems Engineering,
Texas A\&M University. Correspondence to: Yan Li (E-mail: \url{yan.li@tamu.edu}).}
}

\date{\vspace{-6ex}}
\begin{document}
\maketitle

\begin{abstract}
We develop a robust reinforcement learning (RL) framework for learning adaptive treatment policies that generalize across patients and remain reliable under errors in the measured biomarker and shifts in the disease dynamics. By modeling the patient as a hidden context, we address generalization and robustness in a unified manner through robust contextual Markov decision processes. Applying our framework to adaptive therapy for metastatic prostate cancer in a simulator calibrated to clinical data, we obtain generalizable and robust policies that substantially outperform the existing adaptive therapy method. In particular, we attain robustness with almost no loss of performance when the biomarker is accurate and the disease evolves as anticipated, and find that the resulting policy Pareto-dominates the existing method across a diverse array of scenarios. Our findings suggest that robust RL holds great promise as a principled methodology for learning reliable adaptive treatment policies.
\end{abstract}

\section{Introduction}\label{sec:intro}

Adapting treatment to the evolving condition of a patient has received considerable attention in medicine, with success reported in a broad range of applications \cite{murphy2003optimal,chakraborty2014dynamic,kosorok2019precision,komorowski2018artificial,brown2019six,yu2023reinforcement}. In oncology, adaptive therapy pursues this idea to delay the emergence of drug resistance \cite{gatenby2009adaptive}. Its rationale is that, under continuous treatment, drug-sensitive cells are eliminated and drug-resistant cells are left to grow without competition. By pausing treatment once the tumor has responded and resuming it once the tumor has regrown, clinicians can retain sensitive cells that continue to suppress the resistant ones \cite{gatenby2009adaptive,strobl2021turnover}. This approach was tested in a pilot clinical trial of the drug abiraterone for metastatic castration-resistant prostate cancer \cite{zhang2017integrating,zhang2022evolution}. In this clinical trial, clinicians paused abiraterone once prostate-specific antigen (PSA), a blood marker that tracks tumor burden, fell below half of its pretreatment value, and resumed it once PSA returned to the pretreatment value, a protocol we refer to as AT50. Compared with a contemporaneous cohort who received continuous treatment with abiraterone as the standard of care, the median time to progression more than doubled under AT50, while patients received no abiraterone during 46\% of their time in the clinical trial \cite{zhang2022evolution}.

Despite its remarkable success, AT50 is by design a simple heuristic in the sense that it applies the same pair of thresholds to every patient and responds only to the most recent PSA value. In addition, its benefit has so far been observed in only 17 patients, and, based on the models fitted to the same data, other schedules are expected to further improve outcomes \cite{zhang2022evolution}. Existing efforts toward more refined schedules include optimal control of competition models \cite{cunningham2018optimal}, deep reinforcement learning (RL) for controlling the emergence of drug resistance \cite{engelhardt2020dynamic}, and more recently RL policies trained within models calibrated to prostate cancer patients \cite{gallagher2024mathematical,lu2024deep,ma2026single}. Nevertheless, these policies are evaluated within the same models used for their training, and their performance beyond these models remains largely unexamined. Ideally, one would like to learn, in a data-driven manner, a dynamic treatment policy that is more sophisticated than AT50, attains improved performance, and generalizes to the broad population of patients beyond those enrolled in the clinical trial. More importantly, such a policy should remain reliable under the shifts between training and deployment that are common in clinical practice \cite{finlayson2021clinician}.

To be more specific, we identify three properties that a treatment policy learned from data should possess. To begin with, the policy should generalize in the sense that the performance it attains on the patients seen during training carries over to new patients drawn from the same population. From the perspective of contextual and hidden-parameter Markov decision processes (MDPs) \cite{hallak2015contextual,doshivelez2016hidden,kwon2021rl,steimle2021multi}, the natural goal without interacting with a new patient is a policy that is near-optimal on average over the population \cite{ye2023power}, and generalization across populations of environments has been studied extensively in deep RL \cite{kirk2023survey,tobin2017domain}. With a simulation model, we can pursue this goal by training on as many simulated patients as needed, far beyond the 17 patients of the pilot clinical trial, provided that the simulated population is broad enough to contain the patients of interest. In addition, the policy should be robust to errors in its observation. In particular, PSA serves only as an indirect measure of the underlying tumor burden, and radiographic progression can occur without any rise in PSA \cite{bryce2017radiographic}. Consequently, the quantity on which the policy acts may deviate from the quantity it aims to control, while RL policies are known to be sensitive to even small adversarial perturbations of their observations \cite{huang2017adversarial,zhang2020robust}. Most importantly, the policy should retain reliable performance under shifts in the patient dynamics. Indeed, the parameters of the model, including the strength of competition between sensitive and resistant cells on which the success of adaptive therapy hinges, are estimated from the sparse PSA records of a small cohort of patients, and are hence subject to considerable statistical error \cite{zhang2022evolution,browning2026adaptive}. Such shifts are particularly consequential in medical domains, where real patients are scarce and the dynamics of a patient may further change during treatment \cite{antonarakis2014arv7}.

To address these requirements, in this manuscript we propose to study adaptive therapy through the lens of robust contextual MDPs. A contextual MDP \cite{hallak2015contextual} describes a family of MDPs whose transition dynamics are indexed by a context, drawn from a population and hidden from the decision maker. Adaptive therapy fits naturally into this framework once we identify the context with the patient, namely the initial composition of the tumor together with its competition and growth parameters. Each patient then defines its own tumor dynamics, to be inferred by the clinician from the history of PSA measurements and treatment decisions, and AT50 is one particular policy acting on this history. By viewing the patient as a context, we treat generalization and robustness to shifts in the patient dynamics in a unified manner, since both concern the population from which the context is drawn. Specifically, generalization requires the policy to perform well on new contexts drawn from this population, which can be attained by training on sufficiently many sampled contexts, while robustness requires the policy to remain reliable when the context deviates from this population and thereby shifts the transition kernel that it determines.

To attain such robustness, we draw on robust MDPs, in which one seeks the best worst-case performance over an ambiguity set of transition kernels, and which have been studied extensively for standard MDPs \cite{iyengar2005robust,nilim2005robust,wiesemann2013robust,li2022first,morimoto2005robust,pinto2017robust,zhang2020robust}. We carry this viewpoint over to the contextual MDP and formulate robustness as a Markov game between the controller and the context, in which the context may deviate adversarially from the population from which it is drawn \cite{li2023rectangularity,li2026robust}. To account for errors in the observation, we further train the policy under random perturbations of the measured PSA. Within a model of abiraterone therapy fitted to clinical data \cite{zhang2022evolution}, which we refer to as the nominal model, the resulting robust policies substantially outperform AT50. In addition, we attain robustness to shifts in the patient dynamics at \emph{almost no price} in the nominal model, and the resulting policy \emph{nearly Pareto-dominates} the nominally trained policy, i.e., the policy trained in the nominal model with exact observations, under a broad range of realistic errors in the observation and shifts in the patient dynamics. Our contributions can be summarized as follows.

First, we formulate adaptive therapy as a contextual MDP in which the patient is modeled as a hidden context, and find that a policy learned within this formulation substantially outperforms AT50 and generalizes to patients fitted to clinical data. Within the competition model of \cite{zhang2022evolution}, the learned policy extends the mean progression-free time from 29.58 months under AT50 to 50.35 months, while treating in a comparable fraction of months. In addition, on the 32 patients whose initial tumor compositions are fitted to clinical data in \cite{zhang2022evolution}, policies trained on a broad synthetic population constructed without reference to these patients perform comparably to policies trained on a population calibrated to them. Hence, generalization to the patients of interest can be attained by training on sufficiently many simulated patients drawn from a broad population, without tailoring the population to these patients.

Second, we find that errors in PSA of realistic magnitude substantially reduce the benefit of policies that do not account for them. Specifically, under random errors of up to about 20\% of the true PSA value, the nominally trained policy loses a substantial part of its benefit, and AT50 performs only marginally better than continuous treatment. By training under such errors, we obtain a policy that retains the benefit of adaptive therapy in the sense that its performance under such errors nearly matches that without them.

Finally, we formulate robustness to shifts in the patient dynamics as a Markov game between the controller and the context over an ambiguity set of transition kernels \cite{li2026robust}, and find that the resulting kernel-robust policy nearly Pareto-dominates the nominally trained policy and AT50 across all the conditions we consider. Specifically, the kernel-robust policy (1) performs nearly as well as the nominally trained policy in the nominal model, (2) substantially outperforms every other type of policy under shifts in the dynamics, and (3) substantially improves on the nominally trained policy and AT50 even under errors in the observation. Our findings suggest that robustness to shifts in the patient dynamics can be attained at almost no price in the nominal model, and that the resulting policy remains robust against a broad spectrum of errors in the observation and shifts in the patient dynamics. In view of this, we believe that robust RL holds great promise as a principled methodology for learning adaptive therapy policies.

The rest of this manuscript is organized as follows. In Section~\ref{sec:formulation}, we introduce contextual MDPs and their robust counterparts, and formulate adaptive therapy within this framework. We present the numerical experiments in Section~\ref{sec:results}, and conclude the manuscript in Section~\ref{sec:conclusion}.

\section{Problem Formulation}\label{sec:formulation}

In this section, we discuss contextual MDPs and their robust counterparts, and formulate adaptive therapy for metastatic prostate cancer as a concrete instance. Specifically, in Section~\ref{sec:cmdp-general}, we introduce contextual MDPs and establish finite-sample guarantees on the number of sampled contexts needed to learn a generalizable policy. The concrete instance for adaptive therapy is then introduced in Section~\ref{sec:cmdp}, where we adopt the model of abiraterone therapy in \cite{zhang2022evolution}, which is fitted to clinical data from a pilot clinical trial and a contemporaneous cohort. In Section~\ref{sec:uncertainty}, we further introduce the robust counterparts of the contextual MDP against errors in the observation and shifts in the patient dynamics.

\subsection{Contextual Markov Decision Processes}\label{sec:cmdp-general}

We consider a finite-horizon contextual MDP \cite{hallak2015contextual,steimle2021multi} with state space $\cX$, action space $\cA$, observation space $\cY$, horizon $T$, and context space $\cC$. Each context $c\in\cC$ specifies an initial distribution $\mu_c$ over $\cX$, a transition kernel $P_c$, an observation map $O_c:\cX\to\cY$, and a reward function $r_c:\cX\times\cA\to\RR$, where $P_c(\cdot\given x,a)$ denotes the distribution of the next state when action $a$ is taken in state $x$. At the beginning of an episode, a context $c$ is drawn from a population distribution $\cD$ and remains fixed thereafter, after which the initial state is drawn as $x_0\sim\mu_c$, and the state at every period $t$ evolves as $x_{t+1}\sim P_c(\cdot\given x_t,a_t)$. The decision maker observes neither $c$ nor the state $x_t$, and receives at period $t$ the observation $y_t=O_c(x_t)$. Consequently, the action at period $t$ can only be based on the history $h_t=(y_0,a_0,\ldots,a_{t-1},y_t)$ of past observations and actions, i.e., a policy $\pi=(\pi_t)_{t\geq0}$ selects $a_t\sim\pi_t(\cdot\given h_t)$, and $\Pi$ denotes a given class of such history-dependent policies. For each context $c$, let $J(\pi;c):=\EE^\pi_c\sbr{\sum_{t=0}^{T-1}r_c(x_t,a_t)}$ denote the expected cumulative reward of $\pi$, where $\EE^\pi_c$ denotes the expectation over the trajectory generated by $\pi$ under $\mu_c$ and $P_c$. We consider
\begin{align}\label{eq:nominal}
\max_{\pi\in\Pi}\ J(\pi;\cD):=\EE_{c\sim\cD}\sbr{J(\pi;c)}.
\end{align}

In medical applications, the context collects the parameters that determine the dynamics of an individual patient, which vary across patients and are not observed by the decision maker. Since the context is hidden, the decision maker must infer it from the history, and an optimal policy for \eqref{eq:nominal} is in general history-dependent \cite{kaelbling1998planning,kwon2021rl}. In \eqref{eq:nominal}, we average the performance over the population of patients, which is the natural goal when a policy is deployed on a new patient without further interaction \cite{ye2023power}. In practice, the population distribution $\cD$ is unknown, and is accessible only through sampled contexts, such as the parameters of patients fitted to clinical data or of virtual patients generated in a simulator. Given contexts $c_1,\ldots,c_N$ drawn independently from $\cD$, we consider the sample average approximation (SAA) \cite{shapiro2021lectures} of \eqref{eq:nominal},
\begin{align}\label{eq:saa}
\max_{\pi\in\Pi}\ \hat J_N(\pi):=\oldfrac{1}{N}\sum_{i=1}^N J(\pi;c_i).
\end{align}
We proceed to show that an optimal solution of \eqref{eq:saa} generalizes to new contexts drawn from $\cD$ once sufficiently many contexts are sampled. To begin with, we establish such a guarantee for finite state, observation, and action spaces.

\begin{theorem}\label{thm:saa}
Suppose $\cX$, $\cY$, and $\cA$ are finite, and $0\leq r_c(x,a)\leq\bar r$ for all $(c,x,a)\in\cC\times\cX\times\cA$. Let $|\cH|=\sum_{t=0}^{T-1}|\cY|^{t+1}|\cA|^t$ denote the number of histories $h_t$ with $t\leq T-1$. Then for any class $\Pi$ of history-dependent policies, any $\hat\pi_N\in\argmax_{\pi\in\Pi}\hat J_N(\pi)$, and any $\delta\in(0,1)$, with probability at least $1-\delta$, we have
\begin{align*}
J(\hat\pi_N;\cD)\ \geq\ \sup_{\pi\in\Pi}J(\pi;\cD)-T\bar r\sqrt{\oldfrac{2(|\cA|-1)|\cH|\log\big(1+8T\sqrt N\big)+2\log(2/\delta)}{N}}-\oldfrac{T\bar r}{\sqrt N}.
\end{align*}
\end{theorem}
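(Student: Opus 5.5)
The plan is a standard uniform-convergence argument over the policy class, discretized by a covering net. Write $\Delta:=\sup_{\pi\in\Pi}|\hat J_N(\pi)-J(\pi;\cD)|$. If $\pi^\star_\epsilon\in\Pi$ is within $\epsilon$ of $\sup_{\pi\in\Pi}J(\pi;\cD)$, optimality of $\hat\pi_N$ for \eqref{eq:saa} gives
\begin{align*}
J(\hat\pi_N;\cD)\ \geq\ \hat J_N(\hat\pi_N)-\Delta\ \geq\ \hat J_N(\pi^\star_\epsilon)-\Delta\ \geq\ J(\pi^\star_\epsilon;\cD)-2\Delta .
\end{align*}
Letting $\epsilon\to0$, it suffices to bound $2\Delta$ by the two error terms in the statement. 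Because the bound holds for an arbitrary subclass $\Pi$, I would embed $\Pi$ into the class of all history-dependent randomized policies. Such a policy is a point in $\prod_{h\in\cH}\Delta(\cA)$, a set of dimension $(|\cA|-1)|\cH|$.

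The first key step is a Lipschitz estimate. For a fixed context $c$, the trajectory law of the observation–action sequence factors as $\prod_t \pi_t(a_t\given h_t)\cdot(\text{policy-independent terms})$. A telescoping (simulation-lemma) argument then gives
\begin{align*}
|J(\pi;c)-J(\pi';c)|\ \leq\ T\bar r\cdot \sup_{h}\|\pi(\cdot\given h)-\pi'(\cdot\given h)\|_1/2\cdot T,
\end{align*}
or a similar bound such as $T^2\bar r\,\max_h\|\pi(\cdot\given h)-\pi'(\cdot\given h)\|_1$. Uniformity in $c$ is what matters, since the same bound then holds for $\hat J_N$ and for $J(\cdot;\cD)$. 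Next, I cover the product of simplices at scale $\eta$ in $\max_h\|\cdot\|_1$. Each factor needs at most $(1+2/\eta)^{|\cA|-1}$ points, so the log-covering number is at most $(|\cA|-1)|\cH|\log(1+2/\eta)$. Choosing $\eta\asymp 1/(4T\sqrt N)$ makes the discretization error at most $T\bar r/(2\sqrt N)$ on each of $\hat J_N$ and $J$. Counted twice inside $2\Delta$, this produces exactly the additive $T\bar r/\sqrt N$ term, and the argument of the logarithm becomes $1+8T\sqrt N$.

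The second key step handles the net points. For a fixed $\pi$, the $J(\pi;c_i)$ are i.i.d. and lie in $[0,T\bar r]$, so Hoeffding gives a deviation of at most $T\bar r\sqrt{\log(2/\delta')/(2N)}$ with probability $1-\delta'$. A union bound over the $M$ net points with $\delta'=\delta/M$ yields
\begin{align*}
2\cdot T\bar r\sqrt{\oldfrac{\log M+\log(2/\delta)}{2N}}\ =\ T\bar r\sqrt{\oldfrac{2\log M+2\log(2/\delta)}{N}},
\end{align*}
which matches the stated first error term with $\log M=(|\cA|-1)|\cH|\log(1+8T\sqrt N)$.

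The main obstacle is the Lipschitz/cover calibration, so that the constants come out as exactly $8T\sqrt N$ and $T\bar r/\sqrt N$. A naive per-step perturbation bound gives an extra factor of $T$. To avoid it, I would try bounding the total-variation distance between the trajectory laws by $\sum_t \EE\|\pi_t-\pi'_t\|_1/2$ and covering each simplex in a metric adapted to that sum. If that is not tight enough, I would spread the accuracy budget differently across time steps, or cover via the product parametrization of the log-likelihood, and adjust $\eta$ until the constant $8T$ inside the logarithm is recovered.
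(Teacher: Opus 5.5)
Your architecture is the paper's. You reduce to the uniform deviation, net the product of $|\cH|$ simplices in the metric $d_\Pi(\pi,\pi')=\max_{t,h_t}\|\pi_t(\cdot\mid h_t)-\pi'_t(\cdot\mid h_t)\|_1$, use a Lipschitz bound uniform in $c$, and finish with Hoeffding plus a union bound. Your concentration term matches the statement exactly.

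Your Lipschitz route differs mildly from the paper's. You bound the total variation between trajectory laws (including the hidden states) by $\sum_t\EE\,\tfrac12\|\pi_t-\pi'_t\|_1\le T d_\Pi/2$ and multiply by the return range $T\bar r$. The paper instead telescopes with $Q_t$ centered at $(T-t)\bar r/2$ and obtains $T(T+1)\bar r/4\le T^2\bar r/2$. Both give $L=T^2\bar r/2$, so the ``extra factor of $T$'' you worry about never arises.

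What fails as written is the constant calibration, which does not deliver the stated bound. There are two separate errors.
\begin{itemize}
\item The discretization error $e=L\eta$ enters your $2\Delta$ four times, not twice. You have $\Delta\le s_N+2e$ (once for $\hat J_N$, once for $J$), hence $2\Delta\le 2s_N+4e$. So you need $e\le T\bar r/(4\sqrt N)$, not $T\bar r/(2\sqrt N)$ counted twice.
\item The per-simplex count $(1+2/\eta)^{|\cA|-1}$ is not what the standard volumetric argument gives. The simplex has $\ell_1$-diameter $2$, so the disjoint balls of radius $\eta/2$ around a maximal $\eta$-separated set lie in a ball of radius $2+\eta/2$. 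This yields $(1+4/\eta)^{|\cA|-1}$, which is the paper's Lemma on covers. Your choice $\eta=1/(4T\sqrt N)$ would work only with the sharper count, which you assert without justification. With $(1+4/\eta)$ it puts $1+16T\sqrt N$ inside the logarithm.
\end{itemize}

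The paper's pairing fixes both at once. Take $L=T^2\bar r/2$ and $\nu=1/(2T\sqrt N)$. Then $e=T\bar r/(4\sqrt N)$, so $4e=T\bar r/\sqrt N$, and $1+4/\nu=1+8T\sqrt N$. None of the remedies you list is needed: metrics adapted to the time sum, redistributing the accuracy budget across steps, or log-likelihood parametrizations.
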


In view of Theorem~\ref{thm:saa}, the optimality gap of $\hat\pi_N$ on the population is $\cO\big(T\bar r\sqrt{|\cA||\cH|\log(TN)/N}\big)$ with high probability. Consequently, a policy trained on sufficiently many sampled contexts generalizes to new contexts drawn from $\cD$, provided that $\cD$ is broad enough to contain the contexts of interest. In Section~\ref{sec:res-gen}, we demonstrate such generalization for adaptive therapy by applying policies trained on a broad synthetic population to the patients fitted to clinical data in \cite{zhang2022evolution} without further training. The proof of Theorem~\ref{thm:saa} is deferred to Appendix~\ref{app:proof}.

On the other hand, the number $|\cH|$ of histories, and hence the number of sampled contexts required by Theorem~\ref{thm:saa}, grows exponentially in the horizon $T$. We proceed to show that the exponential dependence on $T$ can be avoided when the expected cumulative reward $J(\pi;c)$ is Lipschitz continuous with respect to the context $c$.

\begin{theorem}\label{thm:saa-wasserstein}
Suppose that $\cC\subset\RR^d$ is compact with diameter $\operatorname{diam}(\cC)$ under the Euclidean norm, $0\leq r_c(x,a)\leq\bar r$ for all $(c,x,a)\in\cC\times\cX\times\cA$, and there exists $L_T<\infty$ such that
\begin{align}\label{eq:context-lipschitz}
|J(\pi;c)-J(\pi;c')|\leq L_T\|c-c'\|_2\qquad\text{for all }\pi\in\Pi\text{ and }c,c'\in\cC.
\end{align}
Then there exists a constant $C_d$ depending only on $d$ such that for any $\hat\pi_N\in\argmax_{\pi\in\Pi}\hat J_N(\pi)$ and any $\delta\in(0,1)$, with probability at least $1-\delta$, we have
\begin{align*}
J(\hat\pi_N;\cD)\ \geq\ \sup_{\pi\in\Pi}J(\pi;\cD)-2L_T\operatorname{diam}(\cC)\left(\oldfrac{C_d\log(2N)}{N^{\min\{1/2,1/d\}}}+\sqrt{\oldfrac{\log(1/\delta)}{2N}}\right).
\end{align*}
\end{theorem}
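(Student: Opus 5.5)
We reduce the generalization gap to a Wasserstein distance between $\cD$ and the empirical measure $\hat\cD_N:=\frac1N\sum_{i=1}^N\delta_{c_i}$, and then use known concentration bounds for empirical measures. The point is that \eqref{eq:context-lipschitz} makes each map $c\mapsto J(\pi;c)/L_T$ $1$-Lipschitz on $\cC$. By Kantorovich--Rubinstein duality,
\begin{align*}
\sup_{\pi\in\Pi}\big|J(\pi;\cD)-\hat J_N(\pi)\big|\leq L_T\,W_1(\cD,\hat\cD_N)
\end{align*}
holds uniformly over $\Pi$. No complexity measure of $\Pi$ or of the history space enters, which is why the exponential dependence on $T$ in Theorem~\ref{thm:saa} disappears.

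For the first step, fix $\epsilon>0$ and choose $\pi^\star\in\Pi$ with $J(\pi^\star;\cD)\geq\sup_{\pi}J(\pi;\cD)-\epsilon$. Since $\hat J_N(\hat\pi_N)\geq\hat J_N(\pi^\star)$, we chain
\begin{align*}
J(\hat\pi_N;\cD)\geq\hat J_N(\hat\pi_N)-L_TW_1\geq\hat J_N(\pi^\star)-L_TW_1\geq J(\pi^\star;\cD)-2L_TW_1.
\end{align*}
Letting $\epsilon\to0$ gives a deterministic bound with gap $2L_T\,W_1(\cD,\hat\cD_N)$. This accounts for the factor $2$ in the statement. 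Measurability of $\hat\pi_N$ causes no difficulty, because the bound is uniform over $\Pi$.

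The second step bounds $W_1(\cD,\hat\cD_N)$ with high probability.

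\emph{Expectation.} Rescale $\cC$ to have unit diameter, which produces the factor $\operatorname{diam}(\cC)$. Then apply the standard estimate for measures on a bounded subset of $\RR^d$ (Dudley; Fournier--Guillin; Weed--Bach, via a dyadic partition argument):
\begin{align*}
\EE\,W_1(\cD,\hat\cD_N)\leq C_d\operatorname{diam}(\cC)\,N^{-\min\{1/2,1/d\}}\log(2N).
\end{align*}
The logarithm is needed only in the critical case $d=2$, and elsewhere it is a harmless overestimate. If citing is not acceptable, we would prove this directly. For each level $k$, partition $\cC$ into cubes of side $2^{-k}\operatorname{diam}(\cC)$, of which there are $\cO(2^{kd})$. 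Bound $W_1$ by $\sum_k 2^{-k}\operatorname{diam}(\cC)\sum_{Q}|\cD(Q)-\hat\cD_N(Q)|$. Control each inner sum by $\sqrt{\#Q/N}$ using Cauchy--Schwarz, and optimize the truncation level.

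\emph{Concentration.} $W_1(\cD,\hat\cD_N)$, viewed as a function of $(c_1,\dots,c_N)$, changes by at most $\operatorname{diam}(\cC)/N$ when a single $c_i$ is replaced. McDiarmid's inequality therefore gives, with probability at least $1-\delta$,
\begin{align*}
W_1\leq\EE W_1+\operatorname{diam}(\cC)\sqrt{\log(1/\delta)/(2N)}.
\end{align*}
Combining this with the first step yields exactly the stated bound.

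The main obstacle is the expectation bound on $\EE W_1$ with explicit dimension-only constants, and in particular the uniform treatment of $d=1,2$ versus $d\geq3$ hidden behind $\min\{1/2,1/d\}$ and $\log(2N)$. We plan to cite Fournier--Guillin, or the dyadic-partition bound above, for this step rather than reprove it. Everything else is elementary.
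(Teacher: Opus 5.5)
Your proposal is correct and follows essentially the same route as the paper. Both arguments use the uniform bound $\sup_{\pi\in\Pi}|J(\pi;\cD)-\hat J_N(\pi)|\leq L_TW_1(\cD,\widehat\cD_N)$ (the paper derives it directly from couplings, which is the easy direction of the duality you invoke), the same three-term chain giving the factor $2$, Fournier--Guillin for $\EE\, W_1$ (the paper translates so that $0\in\cC$, bounds the third moment by $\operatorname{diam}(\cC)^3$, and absorbs the $N^{-2/3}$ term into $C_d$), and McDiarmid with bounded differences $\operatorname{diam}(\cC)/N$; your optional dyadic-partition sketch is a valid self-contained substitute for that citation.
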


In view of Theorem~\ref{thm:saa-wasserstein}, the exponential dependence on $T$ through $|\cH|$ in Theorem~\ref{thm:saa} is replaced by $L_T$, which grows only polynomially in $T$ when the dynamics are Lipschitz continuous with respect to the context. To be more specific, we establish in the following proposition the Lipschitz continuity \eqref{eq:context-lipschitz} for any class of policies, when the observation map does not depend on the context, and the initial distribution, the transition kernel, and the reward are Lipschitz continuous with respect to the context.

\begin{proposition}\label{prop:context-lipschitz}
Suppose $\cC\subset\RR^d$, $\cA$ is finite, the observation map $O_c=O$ is independent of $c$, $0\leq r_c(x,a)\leq\bar r$ for all $(c,x,a)\in\cC\times\cX\times\cA$, and there exists a constant $L\geq0$ such that
\begin{align*}
d_{\mathrm{TV}}(\mu_c,\mu_{c'})&\leq L\|c-c'\|_2,\\
d_{\mathrm{TV}}\big(P_c(\cdot\given x,a),P_{c'}(\cdot\given x,a)\big)&\leq L\|c-c'\|_2,\\
|r_c(x,a)-r_{c'}(x,a)|&\leq L\|c-c'\|_2
\end{align*}
for all $c,c'\in\cC$, $x\in\cX$, and $a\in\cA$, where $d_{\mathrm{TV}}(P,Q):=\sup_B|P(B)-Q(B)|$ denotes the total variation distance, and the supremum is taken over all measurable sets $B$. Then \eqref{eq:context-lipschitz} holds for any class $\Pi$ of history-dependent policies with
\begin{align*}
L_T=LT\left(1+\oldfrac{\bar r(T+1)}{2}\right).
\end{align*}
\end{proposition}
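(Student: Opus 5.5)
Since $O_c=O$ does not depend on the context, the policy $\pi$ induces, for every context $c$, a joint law $\PP^\pi_c$ over the full trajectory $(x_0,a_0,x_1,\ldots,x_{T-1},a_{T-1})$. The action kernels $\pi_t(\cdot\given h_t)$ are functions of the state trajectory through $y_s=O(x_s)$ and of past actions, and these kernels are the \emph{same} for $c$ and $c'$. The plan is to split
\begin{align*}
J(\pi;c)-J(\pi;c')=\EE^\pi_c\Big[\sum_{t}\big(r_c-r_{c'}\big)(x_t,a_t)\Big]+\Big(\EE^\pi_c-\EE^\pi_{c'}\Big)\Big[\sum_t r_{c'}(x_t,a_t)\Big].
\end{align*}
The first term is at most $TL\|c-c'\|_2$ by the reward Lipschitz assumption. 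For the second term, we fix $t$ and bound $|\EE^\pi_c r_{c'}(x_t,a_t)-\EE^\pi_{c'}r_{c'}(x_t,a_t)|$. Because $0\le r_{c'}\le\bar r$, this is at most $\bar r\, d_{\mathrm{TV}}$ between the marginal laws of the truncated trajectory $\tau_t=(x_0,a_0,\ldots,x_t,a_t)$ under $c$ and $c'$. We use $\bar r$ rather than $2\bar r$ because the function is nonnegative and bounded by $\bar r$, so its deviation from $\bar r/2$ is at most $\bar r/2$, and $|\int f\,d(P-Q)|\le (\bar r/2)\|P-Q\|_1=\bar r\,d_{\mathrm{TV}}$.

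The key step is a hybrid (telescoping) argument showing $d_{\mathrm{TV}}(\PP^\pi_c|_{\tau_t},\PP^\pi_{c'}|_{\tau_t})\le (t+1)L\|c-c'\|_2$. We define hybrid measures $Q^{(k)}$, $k=0,\ldots,t+1$, that use $\mu_{c'}$ if $k\ge1$, transitions $P_{c'}$ for the first $k-1$ steps, then $\mu_c$ or $P_c$ afterwards, with the shared policy kernels throughout. Consecutive hybrids differ in exactly one kernel: either the initial law or the transition at one step. Consecutive hybrids share the same prefix law and the same subsequent kernels, so the data-processing/coupling bound gives $d_{\mathrm{TV}}(Q^{(k)},Q^{(k+1)})\le \sup_{x,a}d_{\mathrm{TV}}(P_c(\cdot\given x,a),P_{c'}(\cdot\given x,a))$, or $d_{\mathrm{TV}}(\mu_c,\mu_{c'})$ for the initial step, hence $\le L\|c-c'\|_2$. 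Only $t$ transitions precede $x_t$, plus the initial draw, which gives $t+1$ kernel swaps. The action at time $t$ adds no discrepancy since $\pi_t$ is shared. The triangle inequality then yields the bound. Measurability on general $\cX$ is handled by writing trajectory laws as products of kernels (Ionescu–Tulcea). History dependence of $\pi$ causes no trouble, since conditioning on the whole past prefix covers it.

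Summing over $t$, the second term is bounded by $\bar r L\|c-c'\|_2\sum_{t=0}^{T-1}(t+1)=\bar r L\|c-c'\|_2\,T(T+1)/2$. Adding the first term gives $L_T=LT\big(1+\bar r(T+1)/2\big)$, matching the statement. I expect the main obstacle to be stating the hybrid TV step cleanly: showing that swapping one kernel inside a composition with identical subsequent kernels changes the total variation by at most the sup of the kernel TV. This follows from $d_{\mathrm{TV}}(\nu\otimes K,\nu\otimes K')\le\sup d_{\mathrm{TV}}(K,K')$ together with the fact that applying a common Markov kernel does not increase TV. The finiteness of $\cA$ is not essentially needed.
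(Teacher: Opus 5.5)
Your proof is correct and gives exactly the stated constant, but it runs in the opposite direction from the paper's.

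The paper argues backward. It defines the values $V^{c}_t,V^{c'}_t$ of $\pi$ on full histories $z_t$ and sets $\xi_t=\sup_{z_t}|V^c_t(z_t)-V^{c'}_t(z_t)|$. It then derives the one-step recursion $\xi_t\le L\|c-c'\|_2+\xi_{t+1}+(T-t-1)\bar rL\|c-c'\|_2$. Here the transition mismatch at step $t$ is charged against the range $(T-t-1)\bar r$ of the continuation value $V^{c'}_{t+1}$. The initial-distribution mismatch then costs a further $T\bar rL\|c-c'\|_2$.

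You argue forward instead. You charge each period's reward against the accumulated total-variation drift $(t+1)L\|c-c'\|_2$ of the path law, which you obtain from the hybrid argument. The two bookkeepings agree after reindexing, since $T+\sum_{t=0}^{T-1}(T-t-1)=\sum_{t=0}^{T-1}(t+1)=T(T+1)/2$.

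What each route buys:
\begin{itemize}
\item The paper's route is more elementary. It uses only the one-step inequality \eqref{eq:tv-integral} and never constructs path measures or hybrids, or invokes data processing.
\item Your route establishes a stronger intermediate fact. The law of the truncated trajectory $(x_0,a_0,\ldots,x_t,a_t)$ moves by at most $(t+1)L\|c-c'\|_2$ in $d_{\mathrm{TV}}$ as the context changes from $c$ to $c'$. This immediately gives Lipschitz bounds for any bounded functional of the trajectory, such as the distribution of the progression-free time or the harm indicators, not only for the expected cumulative reward.
\end{itemize}

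Your remark that finiteness of $\cA$ is inessential also applies to the paper's argument. There it only serves to write the action average as a finite sum.
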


Given Proposition~\ref{prop:context-lipschitz}, we have $L_T=\cO\big(LT(1+\bar rT)\big)$, and hence Theorem~\ref{thm:saa-wasserstein} avoids the exponential dependence on $T$ in Theorem~\ref{thm:saa}. It is interesting to note that Theorem~\ref{thm:saa-wasserstein} does not require finite state, observation, and action spaces. On the other hand, the rate $\cO\big(N^{-\min\{1/2,1/d\}}\log N\big)$ deteriorates with the dimension $d$ once $d>2$. Nevertheless, such deterioration is mild for systems with a context of small dimension. For instance, the context of adaptive therapy in Section~\ref{sec:cmdp} is four-dimensional, which yields the rate $\cO(N^{-1/4}\log N)$. The proofs of Theorem~\ref{thm:saa-wasserstein} and Proposition~\ref{prop:context-lipschitz} are deferred to Appendix~\ref{app:proof}.

\subsection{Adaptive Therapy as a Contextual MDP}\label{sec:cmdp}

We adopt the model of abiraterone therapy in \cite{zhang2022evolution}, which describes the competition between drug-sensitive and drug-resistant cancer cells. Let $S(\tau)$ and $R(\tau)$ denote the numbers of drug-sensitive and drug-resistant cells at time $\tau$, measured in days, and let $u(\tau)\in\{0,1\}$ indicate whether abiraterone is administered. The dynamics are given by
\begin{align}\label{eq:lv}
\oldfrac{dS(\tau)}{d\tau}=r_S S(\tau)\left(1-\oldfrac{S(\tau)+R(\tau)}{\kappa\big(1-\lambda u(\tau)\big)}\right),
\qquad
\oldfrac{dR(\tau)}{d\tau}=r_R R(\tau)\left(1-\oldfrac{\alpha S(\tau)+R(\tau)}{\kappa}\right),
\end{align}
where $\kappa$ is the carrying capacity, i.e., the maximal number of cells supported by the available nutrients and space, and $r_S=\beta\gamma_S$ and $r_R=\beta\gamma_R$ are the growth rates of the sensitive and resistant cells per day, with base rates $\gamma_S,\gamma_R$ and a common factor $\beta$ scaling both rates.

In essence, \eqref{eq:lv} describes the different responses of the drug-sensitive and drug-resistant cells to abiraterone and the competition between them, which together underlie the rationale of adaptive therapy. Specifically, the sensitive cells depend on testosterone for survival and proliferation, while the resistant cells are independent of testosterone \cite{zhang2022evolution}. Since the synthesis of testosterone is inhibited by abiraterone, the sensitive cells utilize nutrients and space less efficiently under treatment, which is modeled in \eqref{eq:lv} by a reduction of their carrying capacity by the fraction $\lambda\in(0,1)$. In contrast, the resistant cells are not affected by abiraterone, and their growth is suppressed only through the competition from the sensitive cells, whose competitive effect on the resistant cells is measured by $\alpha$. Consequently, once the sensitive cells are depleted under continuous treatment, the resistant cells are released from competition and eventually dominate the tumor. In adaptive therapy, treatment is instead paused before such depletion occurs, which allows the sensitive cells to recover during the treatment break and to continue suppressing the resistant ones.

Following \cite{zhang2022evolution}, we set $(\lambda,\gamma_S,\gamma_R,\alpha,\beta)=(0.95,\,0.0156/6,\,0.0091/6,\,6,\,8)$, which are estimated by fitting \eqref{eq:lv} to the longitudinal PSA records of 32 patients, namely 17 patients of the pilot clinical trial and 15 patients of a contemporaneous cohort who received continuous treatment with abiraterone as the standard of care. We refer to $(\alpha,\beta)=(6,8)$, which are shared by all fitted patients, as the nominal values of the competition and growth parameters. We also take the initial cell numbers of each patient from the same fit (see Appendix~\ref{app:details}), and set $\kappa=10^4$ as in \cite{zhang2022evolution}, which only determines the unit of the cell numbers, since rescaling $S$, $R$, and $\kappa$ by a common factor leaves \eqref{eq:lv} unchanged. Since $\alpha$ and $\beta$ are estimated from the records of only 32 patients and are assumed to be shared by all of them, their values are subject to considerable statistical error, and may also differ across patients, as suggested in \cite{zhang2022evolution} by the poorer fits for three patients. Hence, in Sections~\ref{sec:uncertainty} and \ref{sec:results}, we allow both parameters to vary, and seek policies that remain reliable under shifts in them.

Since PSA is measured monthly in the pilot clinical trial \cite{zhang2022evolution}, the clinician decides at the beginning of every month of 28 days whether to administer abiraterone until the next visit, i.e., chooses $a_t\in\{0,1\}$ at visit $t\in\{0,1,\ldots,T-1\}$ and sets $u(\tau)=a_t$ for $\tau\in[28t,28(t+1))$. We take $T=65$, i.e., a horizon of approximately five years, which exceeds the median overall survival in both the adaptive therapy cohort and the standard-of-care cohort of \cite{zhang2022evolution}. At visit $t$, the state is given by the cell numbers $x_t=(S_t,R_t):=\big(S(28t),R(28t)\big)$. Since $u(\tau)$ is constant within each month, the state at the next visit is determined by $x_t$ and $a_t$ through the transition
\begin{align}\label{eq:transition}
x_{t+1}=F_{\alpha,\beta}(x_t,a_t):=z(28),\qquad\text{where}\quad\oldfrac{dz}{d\tau}=g_{\alpha,\beta}\big(z(\tau),a_t\big)\ \text{ for }\tau\in[0,28],\quad z(0)=x_t,
\end{align}
and $g_{\alpha,\beta}(x,u)$ denotes the right-hand side of \eqref{eq:lv} evaluated at $\big(S(\tau),R(\tau)\big)=x$ and $u(\tau)=u$, which depends on $\alpha$ through the competition term and on $\beta$ through the growth rates. We compute $F_{\alpha,\beta}$ by the classical fourth-order Runge-Kutta method with step size $1$ on the grid $\tau\in\{0,1,\ldots,28\}$, and truncate $S$ and $R$ to $[0,\kappa]$ after every step (see Appendix~\ref{app:details}).

We now formulate adaptive therapy as an instance of \eqref{eq:nominal}. For each patient, we take as context $c=(S_0,R_0,\alpha,\beta)$ the initial numbers of sensitive and resistant cells together with the competition and growth parameters. Given the context, the initial distribution $\mu_c$ is a point mass at $x_0=(S_0,R_0)$, and the transition kernel $P_c(\cdot\given x,a)$ is a point mass at $F_{\alpha,\beta}(x,a)$ defined in \eqref{eq:transition}. Consequently, the context determines the course of the disease under any sequence of treatment decisions. The clinician, however, observes neither the context nor the cell numbers, and only measures PSA. Indeed, sensitive and resistant cells are assumed in \cite{zhang2022evolution} to produce equal amounts of PSA, in which case PSA is given by the tumor burden $B_t=S_t+R_t$ at visit $t$ multiplied by the amount of PSA produced per cell. Since the amount of PSA produced per cell is unknown, the pretreatment PSA does not reveal the initial cell numbers $(S_0,R_0)$. We therefore take as the observation the logarithm of the ratio of PSA to its pretreatment value, which is given by $B_t/B_0$,
\begin{align}\label{eq:obs}
y_t=\log\left(B_t/B_0\right).
\end{align}

Under the protocol of the pilot clinical trial \cite{zhang2022evolution}, every patient receives abiraterone at enrollment, and treatment is reinitiated whenever PSA returns to or above its pretreatment value. We impose both requirements on every policy as a safety constraint, which we refer to as the forced-treatment rule. In terms of the observation \eqref{eq:obs}, we require $a_t=1$ at $t=0$ and at every visit with $y_t\geq0$, and impose no restriction on $a_t$ at any other visit. Accordingly, the policy class $\Pi$ in \eqref{eq:nominal} consists of the history-dependent policies satisfying this constraint.

We define progression as the first visit $t\geq1$ at which the tumor burden reaches $1.2B_0$, i.e., exceeds its pretreatment value by at least 20\%, which is the threshold adopted in \cite{strobl2021turnover,gallagher2025deriving,browning2026adaptive}. Since treatment can only be adjusted at visits, the tumor burden may reach $1.2B_0$ during a treatment break while the tumor is still partially sensitive to treatment. Such progression is referred to as premature progression in \cite{gallagher2025deriving}, where the threshold for pausing treatment is derived from the regrowth of the tumor between two consecutive visits to prevent it. Given that premature progression exposes the patient to a tumor burden that would be regarded as progression under treatment, we count progression irrespective of whether treatment is administered. In contrast, progression is declared only when the tumor grows under treatment in \cite{browning2026adaptive}, which concerns intermittent androgen suppression instead of abiraterone and adopts the model of \cite{strobl2021turnover} (see Appendix~\ref{app:pstd}). We define the progression-free time $G^\pi(c)\in\{0,1,\ldots,T\}$ as the number of months completed before progression. Correspondingly, we model progression as an absorbing state with zero reward, and set the reward at every visit $t$ before progression to
\begin{align*}
r_c(x_t,a_t)=\II\big\{B_{t+1}<1.2B_0\big\},\qquad\text{where}\quad(S_{t+1},R_{t+1})=F_{\alpha,\beta}(x_t,a_t).
\end{align*}
Hence the cumulative reward is given by $G^\pi(c)$, and $J(\pi;\cD)=\EE_{c\sim\cD}\sbr{\EE^\pi\sbr{G^\pi(c)}}$. It should be noted that AT50, implemented as in Section~\ref{sec:method}, is a feasible solution of \eqref{eq:nominal}, whose two thresholds are fixed in advance rather than obtained by solving \eqref{eq:nominal}. By applying RL to \eqref{eq:nominal}, we instead search over the entire class $\Pi$ for a policy that adapts the treatment to the whole history of PSA measurements and treatment decisions.

\subsection{Robust Contextual MDPs}\label{sec:uncertainty}

In clinical practice, a policy learned by solving the contextual MDP \eqref{eq:nominal} is subject to uncertainty beyond the variation across patients, which we model through the population distribution $\cD$ of contexts. In particular, the treatment decisions are made based on the measured PSA, which is affected by analytical and biological variation \cite{marques2022critical}, and serves only as an indirect measure of the underlying tumor burden even when measured exactly \cite{bryce2017radiographic}. The response of a patient to the treatment may also deviate from the nominal model in Section~\ref{sec:cmdp}. Indeed, the key parameters of the nominal model are estimated from a small cohort of patients and are hence subject to considerable statistical error, while the patient population and the disease dynamics may shift between learning and deployment \cite{finlayson2021clinician}, or even during treatment. For instance, tumors may develop new mechanisms of resistance to abiraterone during treatment \cite{antonarakis2014arv7}. Since the learned policy tailors its decisions to the PSA trajectories generated by the nominal model, its performance may deteriorate substantially in the presence of such uncertainty. In view of this, we seek policies that are robust against both errors in the observation and shifts in the patient dynamics.

To this end, we model the errors in the observation as random, and account for shifts in the patient dynamics through the lens of distributionally robust optimization (DRO) \cite{delage2010distributionally,bental2013robust,esfahani2018data,blanchet2019quantifying,gao2023distributionally,kuhn2025distributionally}, in which one seeks the best worst-case performance over an ambiguity set of distributions. Following the game-theoretic view of distributionally robust MDPs \cite{li2023rectangularity,li2026robust}, we formulate robustness to such shifts as a Markov game between the controller and an adversary acting on the transition kernel.

\subsubsection{Errors in the Observation}\label{sec:obs-robust}

To account for errors in the measured PSA, we model the observation at visit $t$ as
\begin{align}\label{eq:obs-pert}
y_t=\log\left(B_t/B_0\right)+\delta_t,\qquad |\delta_t|\leq\varepsilon,
\end{align}
where $\delta_t$ denotes the error at visit $t$, whose magnitude is bounded by $\varepsilon$, and the exact observation \eqref{eq:obs} corresponds to $\delta_t=0$. Both the policy and the forced-treatment rule act on the perturbed observation, and hence an error may also delay the restart of treatment required by the forced-treatment rule, which we refer to as the forced restart. Since the measured PSA is affected by analytical and within-subject biological variation \cite{marques2022critical}, we model the errors as random, and refer to them as random perturbations. Specifically, every $\delta_t$ is drawn independently from the uniform distribution on $[-\varepsilon,\varepsilon]$. We then learn policies that account for such errors by solving \eqref{eq:nominal} under the observation \eqref{eq:obs-pert}.

\subsubsection{Robustness to Shifts in the Patient Dynamics}\label{sec:kernel-robust}

To account for shifts in the patient dynamics, we allow the transition kernel of each patient to change in every month within an ambiguity set $\cP_c$ of transition kernels. Specifically, an adversary policy $\sigma$ selects a kernel from $\cP_c$ at visit $t$, possibly at random, based on the observations up to visit $t$, and we define $J(\pi,\sigma;\cD)$ as the expected cumulative reward of $\pi$ under the kernels selected by $\sigma$. We consider the robust counterpart of \eqref{eq:nominal} given by the zero-sum game
\begin{align}\label{eq:kernel-robust}
\max_{\pi\in\Pi}\ \min_{\sigma}\ J(\pi,\sigma;\cD).
\end{align}
Since the kernel is selected anew at every visit based on the realized history, \eqref{eq:kernel-robust} follows the dynamic formulation of robust MDPs \cite{li2026robust,li2026computational}, and hence accounts for shifts in the dynamics that occur during treatment.

To construct $\cP_c$ in adaptive therapy, we consider shifts in the competition and growth parameters $\alpha$ and $\beta$, whose estimates are subject to statistical error and may differ across patients (Section~\ref{sec:cmdp}). Following Remark~1 of \cite{li2026robust}, we draw $K$ pairs of multipliers $m^{(k)}=(m^{(k)}_\alpha,m^{(k)}_\beta)$ of $(\alpha,\beta)$, $k=1,\ldots,K$, once from a nominal distribution $\Qnom$ concentrated around one (see Appendix~\ref{app:details}), and keep them fixed. For a patient with context $c$, the $k$-th generating kernel $F^{(k)}_c$ is given by the transition \eqref{eq:transition} with parameters $(m^{(k)}_\alpha\alpha,m^{(k)}_\beta\beta)$, and $\cP_c$ consists of the mixtures of the $K$ generating kernels whose weights lie in
\begin{align}\label{eq:mixture}
\cW_\rho=\left\{(1-\rho)\,\tfrac{1}{K}\mathbf{1}+\rho\,q:\ q\in\Delta_K\right\},\qquad\rho\in[0,1],
\end{align}
where $\Delta_K$ denotes the probability simplex. The radius $\rho$ can be interpreted as the expected fraction of months in which the patient dynamics are selected by the adversary. Indeed, since the expected cumulative reward is linear in the mixture weights at every visit, it suffices for the adversary to place the weight $\rho$ on a single generating kernel at every visit. Equivalently, at every visit the kernel is drawn uniformly from the $K$ generating kernels with probability $1-\rho$, and is selected by the adversary with probability $\rho$.

Clearly, for $\rho=0$, problem \eqref{eq:kernel-robust} reduces to \eqref{eq:nominal} with the parameters redrawn uniformly from the generating kernels in every month, while for $\rho=1$ the adversary selects the kernel at every visit. In Section~\ref{sec:res-kernel}, we refer to the radius used in training as the training radius, denoted by $\rhotr$, and evaluate every policy at a range of radii $\rho$, which we refer to as evaluation radii.

\section{Numerical Experiments}\label{sec:results}

In this section, we demonstrate that the policies learned from the formulations in Section~\ref{sec:formulation} possess the three properties identified in Section~\ref{sec:intro}. After describing the experimental setup in Section~\ref{sec:method}, we show in Section~\ref{sec:res-main} that a policy learned from the contextual MDP \eqref{eq:nominal} substantially improves on AT50 without administering more drug. Such a policy, however, acts on PSA as if it were exact. In Section~\ref{sec:res-obs}, we find that errors in PSA of realistic magnitude substantially reduce the benefit of this policy, while by training under random perturbations we retain the benefit of adaptive therapy under such errors. We then turn to shifts in the patient dynamics, and show in Section~\ref{sec:res-kernel} that by solving the kernel-robust formulation \eqref{eq:kernel-robust}, we attain robustness to such shifts at almost no price in the nominal model, and in Section~\ref{sec:res-cross} that the resulting policy nearly Pareto-dominates the policy learned from \eqref{eq:nominal} and AT50 across all the sources of uncertainty we consider. Finally, in Section~\ref{sec:res-gen}, we attain on the 32 fitted patients a performance comparable to that obtained with a population calibrated to them, by training on a broad population constructed without reference to these patients.

\subsection{Experimental Setup}\label{sec:method}

We train all policies on a calibrated population $\Dcal$ of virtual patients, obtained by multiplying the initial cell numbers of the 32 fitted patients of \cite{zhang2022evolution} by independent lognormal factors, with the competition and growth parameters kept at their nominal values $(\alpha,\beta)=(6,8)$. For evaluation, we consider two test populations, namely the virtual test population of 1{,}000 patients drawn from $\Dcal$ independently of the training patients, and the 32 fitted patients, whose empirical distribution we denote by $\Dfit$. All methods are evaluated on the same test patients with common random numbers. The construction of $\Dcal$ and the evaluation protocol are given in Appendix~\ref{app:details}.

All learned policies are parameterized by a gated recurrent unit (GRU) network \cite{cho2014learning} acting on features of the history $h_t$, and trained by proximal policy optimization (PPO) \cite{schulman2017proximal} for 2{,}000 updates, each with 512 patients newly sampled from the training population. For each type of policy, we train three policies with different random seeds, and the remaining details are given in Appendix~\ref{app:details}.

We compare the learned policies with two clinical rules, namely continuous treatment, i.e., $a_t=1$ for all $t$, and AT50. In our model, we implement AT50 as the policy that pauses treatment once $y_t<\log0.5$, resumes it once $y_t\geq0$, and otherwise repeats its previous action. Both satisfy the forced-treatment rule.

Since we impose no explicit penalty in \eqref{eq:nominal} on patients who fare worse than under the standard of care, we monitor such patients separately. To this end, we compare the progression-free time $G^\pi(c)$ of each patient with the progression-free time $G^\on(c)$ of the same patient under continuous treatment. We say that $\pi$ harms the patient if $G^\pi(c)<G^\on(c)-3$, and that it benefits the patient if $G^\pi(c)>G^\on(c)+3$, where the margin of three months corresponds to the interval of 12 weeks between two consecutive imaging assessments in the pilot clinical trial \cite{zhang2022evolution}. We refer to the fractions of patients harmed and benefited as the harm rate and the benefit rate, and report them together with the mean progression-free time $\bar G$. For the learned policies, all values are averaged over the three seeds, with per-seed values in Appendix~\ref{app:results}.

\subsection{Substantial Improvement over AT50}\label{sec:res-main}

We begin with the contextual MDP \eqref{eq:nominal} under the exact observation \eqref{eq:obs}, and refer to the policy learned from \eqref{eq:nominal} as nominal RL. Table~\ref{tab:main} compares nominal RL with the two clinical rules on both test populations.

\begin{table}[!hb]
\centering
\caption{Nominal RL and the clinical rules in the nominal model, in terms of the mean progression-free time $\bar G$ (months) and its gain over continuous treatment, the harm and benefit rates, the fraction of patients without progression after $T=65$ months (PF at $T$), the fraction of treated months, and the mean number of treatment courses, i.e., maximal runs of consecutive treated months.}
\label{tab:main}
\small
\setlength{\tabcolsep}{4pt}
\begin{tabular}{lrrrrrrr}
\toprule
Method & $\bar G$ & Gain & Harm (\%) & Benefit (\%) & PF at $T$ (\%) & Treated (\%) & Courses\\
\midrule
\multicolumn{8}{l}{\emph{Virtual test population (1{,}000 patients)}}\\
Continuous treatment & 15.81 & -- & -- & -- & 2.7 & 100.0 & 1.0\\
AT50 & 29.58 & +13.77 & 28.4 & 54.0 & 32.5 & 28.7 & 6.2\\
Nominal RL & \textbf{50.35} & +34.54 & 0.2 & 91.9 & 61.5 & 30.3 & 12.9\\
\midrule
\multicolumn{8}{l}{\emph{Fitted patients (32 patients)}}\\
Continuous treatment & 14.66 & -- & -- & -- & 0.0 & 100.0 & 1.0\\
AT50 & 25.34 & +10.69 & 28.1 & 53.1 & 21.9 & 29.1 & 5.6\\
Nominal RL & \textbf{48.43} & +33.77 & 0.0 & 94.8 & 53.1 & 29.8 & 12.7\\
\bottomrule
\end{tabular}

\end{table}

\begin{figure}[t]
\centering
\includegraphics{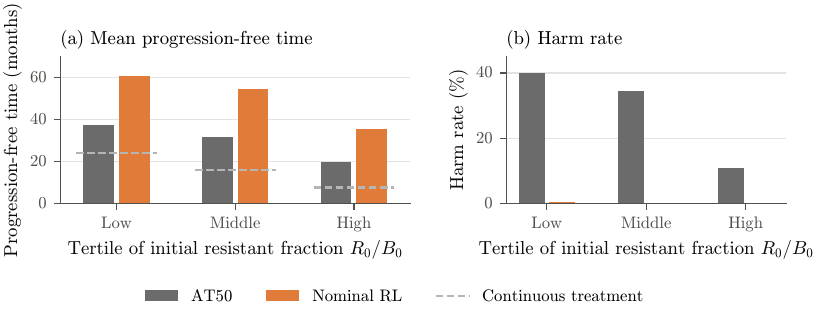}
\caption{Nominal RL, AT50, and continuous treatment on the virtual test population, split into tertiles of the initial resistant fraction $R_0/B_0$. (a) Mean progression-free time. (b) Harm rate.}
\label{fig:tertiles}
\end{figure}

Nominal RL substantially outperforms AT50. Specifically, it attains a mean progression-free time of 50.35 months on the virtual test population, compared with 29.58 months for AT50 and 15.81 months for continuous treatment. Its gain over continuous treatment is 34.54 months, more than twice the gain of 13.77 months attained by AT50. In addition, nominal RL harms 0.2\% of the patients and benefits 91.9\%, compared with 28.4\% and 54.0\% for AT50. On the 32 fitted patients, nominal RL attains 48.43 months, compared with 25.34 months for AT50, and harms none of the fitted patients.

A few remarks are in order. First, the improvement is attained without administering more drug. Nominal RL treats in 30.3\% of the months and AT50 in 28.7\%, while nominal RL divides its treated months into 12.9 courses on average, compared with 6.2 for AT50. Hence nominal RL alternates between treatment and breaks roughly twice as often as AT50, with correspondingly shorter courses and breaks. Second, the improvement holds across patients with different initial resistance (Figure~\ref{fig:tertiles}), a factor that correlates with the time to radiographic progression in both the adaptive therapy cohort and the standard-of-care cohort of \cite{zhang2022evolution}. When we split the virtual test population into tertiles of the initial resistant fraction $R_0/B_0$, nominal RL attains 60.81 months in the lowest tertile and 35.57 months in the highest, compared with 37.28 and 19.93 months for AT50 (Table~\ref{tab:tertiles}). AT50 mainly harms patients with few resistant cells, with a harm rate of 39.9\% in the lowest tertile compared with 10.8\% in the highest, while the harm rate of nominal RL is at most 0.4\% in every tertile. Third, the definition of progression in Section~\ref{sec:cmdp} is essential for an informative comparison. If we declare progression only when the tumor grows under treatment as in \cite{browning2026adaptive}, the growth of the tumor during a break is never penalized, and the rule that withholds treatment whenever permitted attains 53.79 months, the same value as all three seeds of nominal RL trained under this definition, compared with 52.60 months for AT50 (Appendix~\ref{app:pstd}). Consequently, such a definition does not allow us to assess the benefit of adapting the treatment to PSA.

In summary, by learning from data we obtain a policy that substantially improves on AT50 for every level of initial resistance without administering more drug. In addition, since the virtual test patients are drawn independently of the training patients, nominal RL generalizes to new patients from the training population, in line with Theorem~\ref{thm:saa}.

\subsection{Retaining the Benefit of Adaptive Therapy under Errors in PSA}\label{sec:res-obs}

Nominal RL acts on PSA as if it were exact. In clinical practice, however, PSA is subject to measurement errors. We therefore evaluate nominal RL under the random perturbations of Section~\ref{sec:obs-robust} with $\varepsilon=0.2$, which is comparable to the reference change value of about 19\% for total PSA, i.e., the smallest relative change between two consecutive measurements that can be distinguished from analytical and within-subject biological variation \cite{marques2022critical}. We also refer to the policy learned from \eqref{eq:nominal} under the same perturbations as random-perturbation RL, and compare it with nominal RL and the two clinical rules in Figure~\ref{fig:obs}.

\begin{figure}[t]
\centering
\includegraphics{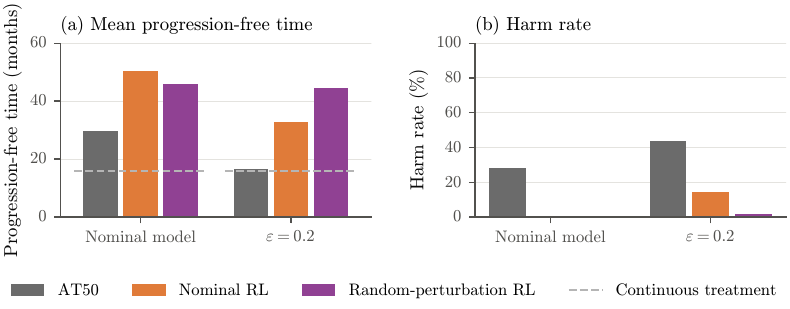}
\caption{Policies on the virtual test population in the nominal model and under random perturbations of PSA with $\varepsilon=0.2$. (a) Mean progression-free time. (b) Harm rate. See Table~\ref{tab:obs} of Appendix~\ref{app:results} for the values.}
\label{fig:obs}
\end{figure}

We observe that errors in PSA with $\varepsilon=0.2$ substantially reduce the benefit of the policies that do not account for them. Under random perturbations, nominal RL drops from 50.35 to 32.96 months, and its harm rate increases from 0.2\% to 14.4\%. Similarly, AT50 drops from 29.58 to 16.46 months, only marginally above the 15.81 months of continuous treatment, and its harm rate increases from 28.4\% to 43.8\%. In contrast, random-perturbation RL retains the benefit of adaptive therapy under such errors. Under random perturbations, it attains 44.49 months with a harm rate of 2.0\%, close to the 45.94 months it attains without perturbation.

A few remarks are in order. First, under perturbations with $\varepsilon=0.2$, we can no longer ensure through the forced-treatment rule that treatment is restarted before progression. Indeed, since $\varepsilon>\log1.2$, an error that lowers the measured PSA can postpone the forced restart until after progression. Since AT50 resumes treatment only through the condition $y_t\geq0$, such an error can postpone its restart in every treatment break. Second, we pay a price in the nominal model for accounting for errors in PSA, with 45.94 months for random-perturbation RL compared with 50.35 months for nominal RL.

In summary, policies that do not account for errors in PSA lose a substantial part of their benefit under errors of realistic magnitude, and by training under random perturbations we retain the benefit of adaptive therapy under such errors.

\subsection{Robustness to Shifts in the Patient Dynamics at Almost No Price}\label{sec:res-kernel}

We next turn to shifts in the patient dynamics, which arise when the competition and growth parameters deviate from their estimated values. We model such shifts by the ambiguity set $\cP_c$ of Section~\ref{sec:kernel-robust} with $K=32$ generating kernels drawn from $\Qnom$, and refer to the policy learned from \eqref{eq:kernel-robust} with training radius $\rhotr\in\{0.1,0.25,0.5\}$ as kernel-robust RL.

We evaluate all policies at the evaluation radii $\rho\in\{0,0.1,0.25,0.5,1\}$. At $\rho=0$, the kernel is drawn uniformly from the generating kernels in each month, and for $\rho>0$, each policy is evaluated against a best-response adversary, i.e., an adversary trained by PPO to minimize the reward of the policy while the policy is held fixed. In addition to the 32 generating kernels used in training, referred to as the training kernels, we evaluate the policies on 32 held-out kernels drawn independently from $\Qnom$. Figure~\ref{fig:kernel} shows the mean progression-free time and the harm rate in the nominal model and as functions of the evaluation radius, with the complete values in Table~\ref{tab:dyn-virtual}.

\begin{table}[!hb]
\centering
\caption{Mean progression-free time (months) under shifts in the patient dynamics on the virtual test population for policies trained on the calibrated population, with harm rates (\%) in parentheses.}
\label{tab:dyn-virtual}
\small
\setlength{\tabcolsep}{4pt}
\begin{tabular}{lcccccc}
\toprule
 & & \multicolumn{3}{c}{Kernel-robust RL} & & \\
\cmidrule(lr){3-5}
Condition & Nominal RL & $\rho_{\mathrm{tr}}=0.1$ & $\rho_{\mathrm{tr}}=0.25$ & $\rho_{\mathrm{tr}}=0.5$ & AT50 & Continuous treatment\\
\midrule
Nominal model & \textbf{50.35} (0.2) & 48.13 (0.1) & 48.77 (0.0) & 48.45 (0.0) & 29.58 (28.4) & 15.81\\
\midrule
\multicolumn{7}{l}{\emph{Training kernels}}\\
$\rho=0$ & 45.74 (8.9) & \textbf{53.45} (0.9) & 53.31 (1.0) & 53.07 (0.9) & 19.04 (44.4) & 16.11\\
$\rho=0.1$ & 44.20 (8.9) & \textbf{51.20} (0.9) & 51.06 (1.2) & 50.88 (1.0) & 19.13 (43.8) & 16.04\\
$\rho=0.25$ & 39.12 (12.2) & 46.49 (0.9) & \textbf{46.57} (1.3) & 46.50 (1.1) & 17.92 (45.9) & 15.68\\
$\rho=0.5$ & 30.98 (18.2) & 37.88 (0.8) & 38.44 (1.2) & \textbf{39.72} (0.8) & 15.51 (51.1) & 14.56\\
$\rho=1$ & 17.46 (29.1) & 23.38 (0.1) & 23.26 (0.6) & \textbf{25.08} (0.4) & 11.65 (65.1) & 12.72\\
\midrule
\multicolumn{7}{l}{\emph{Held-out kernels}}\\
$\rho=0$ & 47.26 (8.6) & \textbf{54.80} (1.0) & 54.21 (1.7) & 53.66 (1.7) & 20.23 (43.6) & 16.81\\
$\rho=0.1$ & 45.54 (9.4) & \textbf{52.80} (0.9) & 52.62 (1.2) & 52.36 (1.1) & 19.90 (44.4) & 16.71\\
$\rho=0.25$ & 39.30 (13.6) & 46.83 (0.6) & 46.87 (0.8) & \textbf{47.10} (0.7) & 18.80 (46.1) & 16.32\\
$\rho=0.5$ & 29.60 (19.9) & 36.67 (0.4) & 36.55 (0.5) & \textbf{37.85} (0.4) & 16.96 (49.8) & 15.27\\
$\rho=1$ & 17.44 (20.4) & 19.48 (0.1) & 19.36 (0.1) & \textbf{20.13} (0.1) & 12.11 (63.1) & 13.31\\
\bottomrule
\end{tabular}

\end{table}

\begin{figure}[t]
\centering
\includegraphics{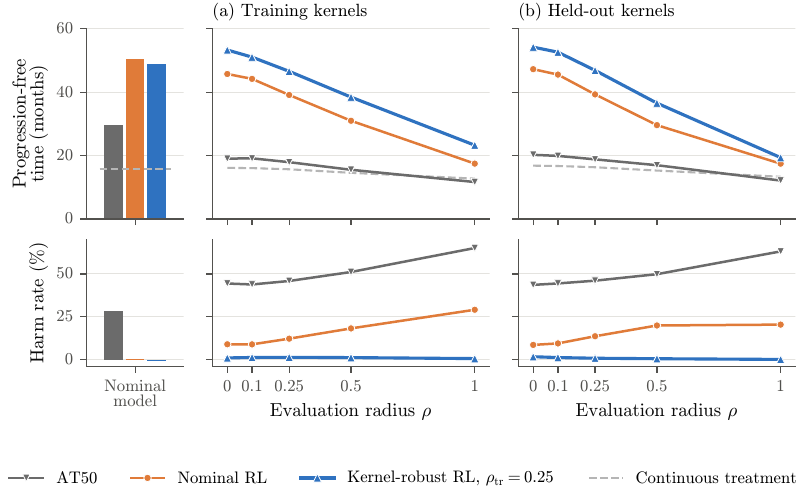}
\caption{Mean progression-free time (top) and harm rate (bottom) on the virtual test population. The left panels show the nominal model, and the middle and right panels show the performance as a function of the evaluation radius $\rho$, (a) on the training kernels and (b) on the held-out kernels. Kernel-robust RL is shown for $\rhotr=0.25$.}
\label{fig:kernel}
\end{figure}

Kernel-robust RL improves on nominal RL and AT50 at every evaluation radius on both sets of kernels. In the nominal model, it forgoes between 1.58 and 2.22 months compared with nominal RL, which is small compared with its improvement of between 7.33 and 7.71 months over nominal RL at $\rho=0$ on the training kernels. On the training kernels, kernel-robust RL evaluated at its training radius $\rho=\rhotr$ improves on nominal RL by between 7.00 and 8.74 months, and in particular attains 46.57 months at $\rho=0.25$, compared with 39.12 months for nominal RL. At $\rho=1$, which exceeds every training radius, kernel-robust RL with $\rhotr=0.5$ attains 25.08 months, compared with 17.46, 11.65, and 12.72 months for nominal RL, AT50, and continuous treatment. In addition, the harm rate of kernel-robust RL is at most 1.7\% at every radius on both sets of kernels, while on the training kernels, the harm rate of nominal RL grows from 8.9\% at $\rho=0$ to 29.1\% at $\rho=1$, and that of AT50 from 44.4\% to 65.1\%.

A few remarks are in order. First, nominal RL loses much of its advantage once the parameters vary, even in the absence of an adversary. Since the multipliers $m_\alpha$ of the generating kernels have mean larger than one, the resistant cells are on average suppressed more strongly at $\rho=0$ than in the nominal model. Indeed, at $\rho=0$, continuous treatment improves from 15.81 to 16.11 months, and kernel-robust RL improves from between 48.13 and 48.77 months to between 53.07 and 53.45 months. Nominal RL, however, drops from 50.35 to 45.74 months. Hence a policy trained for a single pair $(\alpha,\beta)$ is tuned to the pair used in training, and we benefit from exposing the policy to varying dynamics during training. Second, the best training radius increases with the evaluation radius. The policy with $\rhotr=0.1$ is best at $\rho\in\{0,0.1\}$, while the policy with $\rhotr=0.5$ is best at $\rho\in\{0.5,1\}$ on both sets of kernels. Hence the training radius can be chosen according to the magnitude of the shifts expected in deployment. Third, AT50 falls below continuous treatment at $\rho=1$ on both sets of kernels. Indeed, at $\rho=1$ the adversary can select the generating kernel with the weakest competition in every month, and adaptive therapy fails, or only delays failure, when the competitive effect of the sensitive cells on the resistant cells is small \cite{zhang2022evolution}.

In summary, by training against shifts in the patient dynamics, we retain the benefit of adaptive therapy under such shifts, including those generated by held-out kernels, while sacrificing almost no performance when the dynamics evolve as anticipated.

\subsection{Universal Benefits of Kernel-Robust RL}\label{sec:res-cross}

So far, we have evaluated random-perturbation RL and kernel-robust RL only under the source of uncertainty that each policy is trained against. In deployment, however, the source of uncertainty is not known in advance, and we would like a single policy that performs well under all of them. We therefore cross-evaluate the two policies, i.e., we evaluate random-perturbation RL under shifts in the dynamics and kernel-robust RL with training radius $\rhotr=0.25$ under random perturbations of PSA, and compare both policies with nominal RL and the two clinical rules in Figure~\ref{fig:cross}.

\begin{figure}[t]
\centering
\includegraphics{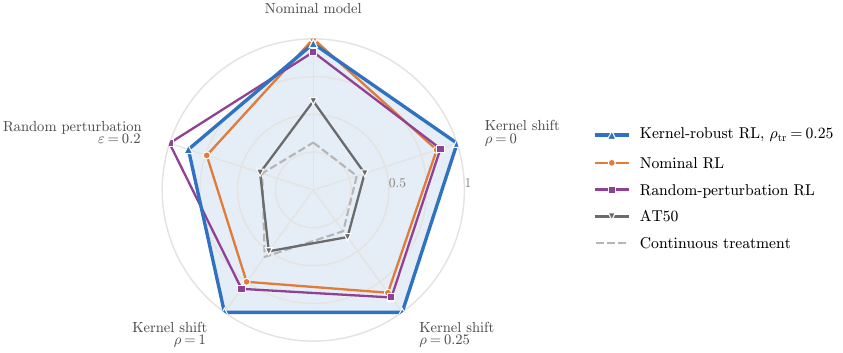}
\caption{Cross-evaluation on the virtual test population. Each axis shows the mean progression-free time divided by the highest value attained in the corresponding condition by the policies in this figure. Kernel shift denotes the evaluation radius $\rho$ on the training kernels, and random perturbation denotes errors in PSA with $\varepsilon=0.2$. Kernel-robust RL is trained with $\rhotr=0.25$. See Table~\ref{tab:cross} of Appendix~\ref{app:results} for the values.}
\label{fig:cross}
\end{figure}

We find that kernel-robust RL nearly Pareto-dominates nominal RL and AT50 across the five conditions of Figure~\ref{fig:cross}, namely the nominal model, the evaluation radii $\rho\in\{0,0.25,1\}$ on the training kernels, and random perturbations of PSA. Indeed, the polygon of kernel-robust RL encloses the polygon of AT50, and lies outside the polygon of nominal RL on every axis except the nominal model. In particular, kernel-robust RL forgoes 1.58 months compared with nominal RL in the nominal model, and attains the highest value under every shift in the dynamics, with 53.31, 46.57, and 23.26 months at $\rho\in\{0,0.25,1\}$, compared with 45.74, 39.12, and 17.46 months for nominal RL. Under random perturbations of PSA, which it never encounters during training, kernel-robust RL still improves on nominal RL, with 38.65 compared with 32.96 months. In addition, AT50 falls below kernel-robust RL, and indeed below all learned policies, in every condition.

On the other hand, by comparing kernel-robust RL with random-perturbation RL in Figure~\ref{fig:cross}, we observe that each of the two policies attains the highest value under the source of uncertainty it is trained against. Specifically, random-perturbation RL attains 44.49 months under random perturbations of PSA, compared with 38.65 months for kernel-robust RL, while kernel-robust RL attains higher values than random-perturbation RL under every shift in the dynamics.

In summary, kernel-robust RL nearly Pareto-dominates nominal RL and AT50 across the sources of uncertainty, in the sense that it nearly matches nominal RL in the nominal model, outperforms every other policy under shifts in the dynamics, and substantially improves on nominal RL and AT50 under errors in PSA. In view of this, we believe that kernel-robust training is a natural default for learning adaptive therapy policies, to be complemented by training under random perturbations when errors in PSA are of concern.

\subsection{Zero-Shot Generalization from a Broad Synthetic Population}\label{sec:res-gen}

By Theorem~\ref{thm:saa}, a policy trained on sufficiently many sampled patients generalizes to new patients drawn from the training population, provided that this population is broad enough to contain the patients of interest. So far, however, we have trained all policies on the calibrated population $\Dcal$, which is constructed from the 32 fitted patients. We now train each type of policy on a broad population $\Dwide$ constructed without reference to the fitted patients, with $\alpha\in[2,8]$, $\beta\in[6,10]$, and broad ranges of the initial tumor burden and the initial resistant fraction (see Appendix~\ref{app:details}). We then apply the resulting policies to the fitted patients without further training, which we refer to as zero-shot generalization, and evaluate them under shifts in the dynamics against best-response adversaries trained on the fitted patients. Figure~\ref{fig:gen} compares the two training populations, and Table~\ref{tab:gen} reports the policies trained on $\Dwide$ under errors in PSA and shifts in the dynamics.

\begin{figure}[t]
\centering
\includegraphics{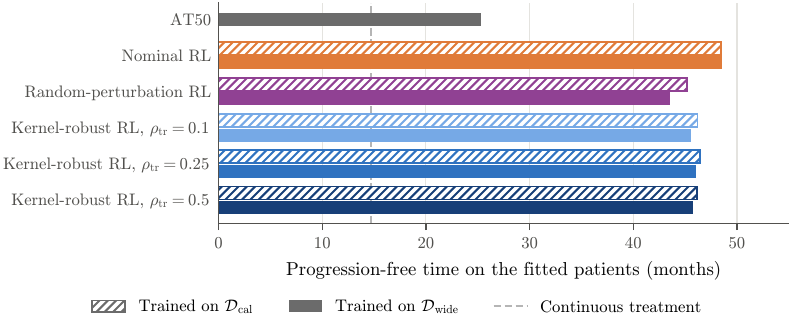}
\caption{Mean progression-free time on the 32 fitted patients in the nominal model for policies trained on the calibrated population and on the broad population.}
\label{fig:gen}
\end{figure}

\begin{table}[!htb]
\centering
\caption{Mean progression-free time (months) on the 32 fitted patients for policies trained on $\Dcal$ and on $\Dwide$ under the conditions of Table~\ref{tab:cross}.}
\label{tab:gen}
\footnotesize
\setlength{\tabcolsep}{4pt}
\begin{tabular}{lcccccc}
\toprule
 & Trained on $\Dcal$ & \multicolumn{5}{c}{Trained on $\Dwide$}\\
\cmidrule(lr){2-2}\cmidrule(lr){3-7}
Method & Nominal model & Nominal model & $\rho=0$ & $\rho=0.25$ & $\rho=1$ & $\varepsilon=0.2$\\
\midrule
Nominal RL & \textbf{48.43} & \textbf{48.59} & 46.14 & 33.99 & 13.64 & 26.85\\
Random-perturbation RL & 45.17 & 43.54 & 46.64 & 38.37 & \textbf{22.04} & \textbf{42.08}\\
Kernel-robust RL, $\rho_{\mathrm{tr}}=0.1$ & 46.20 & 45.58 & \textbf{53.34} & 44.32 & 20.03 & 32.77\\
Kernel-robust RL, $\rho_{\mathrm{tr}}=0.25$ & 46.46 & 46.03 & 52.84 & 43.31 & 20.79 & 31.90\\
Kernel-robust RL, $\rho_{\mathrm{tr}}=0.5$ & 46.11 & 45.77 & 53.31 & \textbf{44.59} & 21.18 & 32.94\\
\midrule
AT50 & 25.34 & 25.34 & 17.82 & 16.91 & 10.07 & 16.22\\
Continuous treatment & 14.66 & 14.66 & 14.97 & 14.37 & 10.90 & 14.66\\
\bottomrule
\end{tabular}

\end{table}

We observe almost no loss on the fitted patients from training on the broad population. Nominal RL trained on the broad population attains 48.59 months, compared with 48.43 months when trained on the calibrated population, and the two training populations differ by at most 1.63 months for every type of policy. In addition, we retain on the fitted patients the benefits of training under random perturbations and of kernel-robust training. Under random perturbations of PSA, random-perturbation RL attains 42.08 months, compared with 26.85 months for nominal RL. Under shifts in the dynamics on the training kernels, kernel-robust RL attains between 43.31 and 44.59 months at $\rho=0.25$ and between 20.03 and 21.18 months at $\rho=1$, compared with 33.99 and 13.64 months for nominal RL.

Kernel-robust RL also harms substantially fewer fitted patients than nominal RL and AT50 under the strongest shifts in the dynamics. Specifically, at $\rho=1$ on the training kernels, it harms between 8.4\% and 16.7\% of the fitted patients, compared with 47.3\% and 79.3\% for nominal RL and AT50, and on the held-out kernels it harms none of them, compared with 13.3\% and 63.6\% (Table~\ref{tab:dyn-real32}).

In summary, by training on a broad synthetic population constructed without the fitted patients, we attain on these patients a performance comparable to that obtained with a population calibrated to them, in line with Theorem~\ref{thm:saa}, and retain the benefits of both random-perturbation RL and kernel-robust RL.

\section{Concluding Remarks}\label{sec:conclusion}

In this manuscript, we study adaptive therapy through the lens of robust contextual MDPs, in which the patient is modeled as a hidden context drawn from a population. To account for the uncertainty beyond the variation across patients, we further train the policy under random perturbations of the measured biomarker and against an adversary acting on the transition kernel. Within a model of abiraterone therapy fitted to clinical data, we find that (1) a policy learned within this formulation substantially outperforms AT50 and generalizes to the fitted patients from a broad population constructed without them, (2) errors in PSA of realistic magnitude substantially reduce the benefit of policies that do not account for them, while by training under such errors we retain the benefit of adaptive therapy, and (3) we attain robustness to shifts in the patient dynamics at almost no price in the nominal model, with the resulting policy nearly Pareto-dominating nominal RL and AT50 across all the errors and shifts we consider. In view of this, we believe that robust RL offers a principled route toward adaptive treatment policies that remain reliable beyond the model in which they are learned.

\paragraph{Use of Large Language Models (LLMs).} We acknowledge the assistance of LLMs in prototyping the numerical experiments and improving the presentation of this manuscript.

\bibliographystyle{plain}
\bibliography{references}

\appendix

\section{Supplementary Proofs in Section~\ref{sec:formulation}}\label{app:proof}

To establish Theorem~\ref{thm:saa}, we define the distance between two history-dependent policies $\pi$ and $\pi'$ as
\begin{align*}
d_\Pi(\pi,\pi'):=\max_{0\leq t\leq T-1}\ \max_{h_t}\ \big\|\pi_t(\cdot\given h_t)-\pi'_t(\cdot\given h_t)\big\|_1,
\end{align*}
where the inner maximum is taken over all histories $h_t$ up to period $t$. We first show that the expected cumulative reward $J(\pi;c)$ is Lipschitz continuous in $\pi$ with respect to $d_\Pi$.

\begin{lemma}\label{lem:lipschitz}
Suppose $\cX$, $\cY$, and $\cA$ are finite, and $0\leq r_c(x,a)\leq\bar r$ for all $(c,x,a)\in\cC\times\cX\times\cA$. Then for any $c\in\cC$ and any pair of history-dependent policies $\pi,\pi'$, we have
\begin{align*}
\big|J(\pi;c)-J(\pi';c)\big|\leq\oldfrac{T^2\bar r}{2}\,d_\Pi(\pi,\pi').
\end{align*}
\end{lemma}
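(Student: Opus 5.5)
The plan is a telescoping (hybrid) argument over the periods at which the two policies are used. For $k=0,1,\ldots,T$, I define the hybrid policy $\pi^{(k)}$ that follows $\pi'_t$ for $t<k$ and $\pi_t$ for $t\geq k$. Then $\pi^{(0)}=\pi$ and $\pi^{(T)}=\pi'$, and
\begin{align*}
J(\pi;c)-J(\pi';c)=\sum_{k=0}^{T-1}\big(J(\pi^{(k)};c)-J(\pi^{(k+1)};c)\big).
\end{align*}
The policies $\pi^{(k)}$ and $\pi^{(k+1)}$ differ only at period $k$. It therefore suffices to show that each summand is at most $(T-k)\bar r\,d_\Pi(\pi,\pi')/2$ in absolute value. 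Summing gives $\frac{\bar r}{2}d_\Pi(\pi,\pi')\sum_{k=0}^{T-1}(T-k)=\frac{T(T+1)\bar r}{4}d_\Pi(\pi,\pi')$, which is at most $\frac{T^2\bar r}{2}d_\Pi(\pi,\pi')$ since $T+1\leq 2T$.

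To bound a single summand, I condition on the joint state–history pair $(x_k,h_k)$ at period $k$. Its distribution is the same under both hybrids, because they coincide on periods $t<k$ and the context $c$ is fixed. Let $Q_k(x_k,h_k,a)$ denote the expected reward from period $k$ onward, given that $a$ is taken at period $k$ and $\pi_t$ is followed for $t>k$. Then
\begin{align*}
J(\pi^{(k)};c)-J(\pi^{(k+1)};c)=\EE\Big[\sum_{a\in\cA}\big(\pi_k(a\given h_k)-\pi'_k(a\given h_k)\big)Q_k(x_k,h_k,a)\Big].
\end{align*}
The $(x_k,h_k)$-conditioning is essential here: the policy sees only $h_k$, but the continuation value depends on the hidden state $x_k$. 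Since the rewards lie in $[0,\bar r]$, we have $Q_k\in[0,(T-k)\bar r]$. The coefficients $\pi_k(a\given h_k)-\pi'_k(a\given h_k)$ sum to zero over $a$, so I can subtract the constant $(T-k)\bar r/2$ from $Q_k$ without changing the sum. This yields the bound
\begin{align*}
\tfrac{(T-k)\bar r}{2}\,\|\pi_k(\cdot\given h_k)-\pi'_k(\cdot\given h_k)\|_1\leq\tfrac{(T-k)\bar r}{2}\,d_\Pi(\pi,\pi').
\end{align*}

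The main obstacle I expect is making this conditioning rigorous for history-dependent policies in a partially observed setting. I plan to handle it by working with the finite joint law of $(x_0,y_0,a_0,\ldots,x_k,y_k)$. Finiteness of $\cX$, $\cY$, and $\cA$ makes all expectations finite sums. The observation $y_t=O_c(x_t)$ is a deterministic function, but it enters only through $h_k$, so it causes no difficulty.
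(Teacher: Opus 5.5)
Your proposal is correct and is essentially the paper's argument. Your hybrid telescoping derives the same performance-difference identity, with the roles of $\pi$ and $\pi'$ interchanged: you use trajectories of $\pi'$ and continuation values of $\pi$, where the paper uses trajectories of $\pi$ and $Q$-values of $\pi'$. The rest is identical: centering at $(T-t)\bar r/2$, the H\"older step, and the bound $T(T+1)/4\le T^2/2$. Conditioning on $(x_k,h_k)$ rather than on the paper's full history $z_k$ is equivalent, since the continuation depends on the past only through these.
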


\begin{proof}
Let us fix $c\in\cC$, and denote by $z_t=(x_0,y_0,a_0,\ldots,a_{t-1},x_t,y_t)$ the full history up to period $t$, which consists of the history $h_t$ together with the states $x_0,\ldots,x_t$. To evaluate $\pi'$ along the trajectories generated by $\pi$, we set $V_T\equiv0$ and define recursively, for $t=T-1,\ldots,0$,
\begin{align*}
Q_t(z_t,a):=r_c(x_t,a)+\EE\sbr{V_{t+1}(z_{t+1})\given z_t,a_t=a},\qquad V_t(z_t):=\sum_{a\in\cA}\pi'_t(a\given h_t)\,Q_t(z_t,a),
\end{align*}
where the conditional expectation is taken over $x_{t+1}\sim P_c(\cdot\given x_t,a)$ with $y_{t+1}=O_c(x_{t+1})$, and is hence identical under $\pi$ and $\pi'$. By induction, we have
\begin{align*}
V_t(z_t)=\EE^{\pi'}_c\sbr{\sum_{s=t}^{T-1}r_c(x_s,a_s)\given z_t},\qquad0\leq Q_t(z_t,a)\leq(T-t)\bar r.
\end{align*}
In addition, since the distribution of $z_0$ does not depend on the policy, we have $J(\pi';c)=\EE^\pi_c\sbr{V_0(z_0)}$. Consequently,
\begin{align*}
J(\pi;c)-J(\pi';c)
&\overset{(a)}{=}\EE^\pi_c\sbr{\sum_{t=0}^{T-1}\big(r_c(x_t,a_t)+V_{t+1}(z_{t+1})-V_t(z_t)\big)}\\
&\overset{(b)}{=}\EE^\pi_c\sbr{\sum_{t=0}^{T-1}\sum_{a\in\cA}\big(\pi_t(a\given h_t)-\pi'_t(a\given h_t)\big)\Big(Q_t(z_t,a)-\oldfrac{(T-t)\bar r}{2}\Big)},
\end{align*}
where (a) follows from $V_T\equiv0$ and $J(\pi';c)=\EE^\pi_c\sbr{V_0(z_0)}$, and (b) follows from conditioning on $z_t$, the definitions of $Q_t$ and $V_t$, and $\sum_{a\in\cA}\big(\pi_t(a\given h_t)-\pi'_t(a\given h_t)\big)=0$. Hence,
\begin{align*}
\big|J(\pi;c)-J(\pi';c)\big|
&\overset{(c)}{\leq}\EE^\pi_c\sbr{\sum_{t=0}^{T-1}\big\|\pi_t(\cdot\given h_t)-\pi'_t(\cdot\given h_t)\big\|_1\max_{a\in\cA}\Big|Q_t(z_t,a)-\oldfrac{(T-t)\bar r}{2}\Big|}\\
&\overset{(d)}{\leq}\sum_{t=0}^{T-1}\oldfrac{(T-t)\bar r}{2}\,d_\Pi(\pi,\pi')=\oldfrac{T(T+1)\bar r}{4}\,d_\Pi(\pi,\pi')\leq\oldfrac{T^2\bar r}{2}\,d_\Pi(\pi,\pi'),
\end{align*}
where (c) follows from H\"older's inequality, and (d) follows from the definition of $d_\Pi$ and $0\leq Q_t(z_t,a)\leq(T-t)\bar r$. The proof is then completed.
\end{proof}

We next control the number of policies needed to cover the class of all history-dependent policies.

\begin{lemma}\label{lem:cover}
Suppose $\cY$ and $\cA$ are finite. Then for any $\nu>0$, there exist $K\leq(1+4/\nu)^{(|\cA|-1)|\cH|}$ history-dependent policies $\pi^1,\ldots,\pi^K$ such that
\begin{align*}
\min_{1\leq k\leq K}d_\Pi(\pi,\pi^k)\leq\nu\qquad\text{for any history-dependent policy }\pi.
\end{align*}
\end{lemma}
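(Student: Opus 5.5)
The plan is to cover each conditional distribution $\pi_t(\cdot\given h_t)$ separately and take the product of these covers over all histories. A history-dependent policy is the collection $\big(\pi_t(\cdot\given h_t)\big)_{t\leq T-1,\,h_t}$, indexed by exactly $|\cH|$ histories, each entry a point of the simplex $\Delta_{\cA}\subset\RR^{|\cA|}$. Since $d_\Pi$ is the maximum over $(t,h_t)$ of the $\ell_1$ distance between conditionals, a $\nu$-net in $\ell_1$ of $\Delta_{\cA}$ yields a $\nu$-net of the policy class under $d_\Pi$. We build it by choosing, at every history, the net point closest to $\pi_t(\cdot\given h_t)$. Every such combination is itself a valid history-dependent policy, so the net lies inside the class.

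The key step is therefore to bound the $\nu$-covering number of $\Delta_{\cA}$ in $\ell_1$ by $(1+4/\nu)^{|\cA|-1}$. Raising this to the power $|\cH|$ gives the claimed $K$.

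To do this, parametrize $\Delta_{\cA}$ by its first $|\cA|-1$ coordinates $p'=(p_1,\ldots,p_{|\cA|-1})$, with the last coordinate $1-\sum p'_i$. For two points, the $\ell_1$ distance on the simplex satisfies $\|p-q\|_1\leq 2\|p'-q'\|_1$, because the last coordinate differs by at most $\|p'-q'\|_1$. The projected set $\{p'\geq0,\ \sum p'_i\leq1\}$ sits inside the unit $\ell_1$ ball $B_1$ of $\RR^{|\cA|-1}$.

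Next apply the standard volumetric argument. Take a maximal $\nu/2$-separated subset of the projected set in $\ell_1$; it is automatically a $\nu/2$-net there. The $\ell_1$ balls of radius $\nu/4$ around its points are disjoint and lie in $(1+\nu/4)B_1$. Comparing volumes in dimension $|\cA|-1$ gives a cardinality at most $\big((1+\nu/4)/(\nu/4)\big)^{|\cA|-1}=(1+4/\nu)^{|\cA|-1}$. The net points lie in the projected set, so they lift to genuine distributions in $\Delta_{\cA}$. By the factor $2$, they form a $\nu$-net of $\Delta_{\cA}$ in $\ell_1$.

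If $|\cA|=1$ the simplex is a single point and $K=1$, so this case is trivial.

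The main obstacle is getting the constant $4$ exactly. This depends on working in dimension $|\cA|-1$ rather than $|\cA|$, and on correctly accounting for the factor $2$ from the dropped coordinate. Everything else is routine bookkeeping: product nets, and the observation that the maximum over histories in $d_\Pi$ is controlled coordinate-wise.
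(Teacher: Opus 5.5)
Your proposal is correct and follows essentially the same route as the paper: a product of $\ell_1$-nets of $\Delta_\cA$ over the $|\cH|$ histories, with the net size bounded by a volumetric packing argument in dimension $|\cA|-1$. The only difference is cosmetic. The paper packs balls of radius $\nu/2$ directly in the affine hull of the simplex inside a ball of radius $2+\nu/2$, using that the simplex has $\ell_1$-diameter $2$. You instead project to the first $|\cA|-1$ coordinates and absorb the factor $2$ through $\|p-q\|_1\leq2\|p'-q'\|_1$, which gives the same bound $(1+4/\nu)^{|\cA|-1}$.
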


\begin{proof}
Let us denote by $\Delta_\cA$ the probability simplex over $\cA$, and take a maximal subset $S\subset\Delta_\cA$ satisfying $\|q-q'\|_1>\nu$ for all distinct $q,q'\in S$. By the maximality of $S$, we have
\begin{align}\label{eq:simplex-net}
\min_{q\in S}\|p-q\|_1\leq\nu\qquad\text{for any }p\in\Delta_\cA.
\end{align}
To control $|S|$, let us denote by $B(q,r)$ the $\ell_1$-ball of radius $r$ around $q$ within the affine hull of $\Delta_\cA$, which has dimension $|\cA|-1$. Since $\|p-q\|_1\leq2$ for all $p,q\in\Delta_\cA$, for any fixed $q_0\in S$, we have
\begin{align*}
B(q,\nu/2)\cap B(q',\nu/2)=\emptyset\quad\text{for all distinct }q,q'\in S,\qquad\bigcup_{q\in S}B(q,\nu/2)\subset B(q_0,2+\nu/2).
\end{align*}
By comparing the volumes in the above inclusion, we obtain
\begin{align*}
|S|\,(\nu/2)^{|\cA|-1}\leq(2+\nu/2)^{|\cA|-1},\quad\text{i.e.,}\quad|S|\leq(1+4/\nu)^{|\cA|-1}.
\end{align*}
Let $\{\pi^1,\ldots,\pi^K\}$ be the set of policies satisfying $\pi_t(\cdot\given h_t)\in S$ for any history $h_t$, in which case $K=|S|^{|\cH|}\leq(1+4/\nu)^{(|\cA|-1)|\cH|}$. For any history-dependent policy $\pi$, by choosing $\pi^k$ with $\pi^k_t(\cdot\given h_t)\in\argmin_{q\in S}\|\pi_t(\cdot\given h_t)-q\|_1$ for any $h_t$, we obtain $d_\Pi(\pi,\pi^k)\leq\nu$ from \eqref{eq:simplex-net}. The proof is then completed.
\end{proof}

\begin{proof}[Proof of Theorem~\ref{thm:saa}]
Let us set $\eta=T\bar r/(4\sqrt N)$ and $\nu=2\eta/(T^2\bar r)=1/(2T\sqrt N)$, and let $\pi^1,\ldots,\pi^K$ be the policies given by Lemma~\ref{lem:cover}, with $K\leq(1+8T\sqrt N)^{(|\cA|-1)|\cH|}$. Let us first fix $k$. Since $c_1,\ldots,c_N$ are drawn independently from $\cD$ and $0\leq r_c\leq\bar r$, the values $J(\pi^k;c_1),\ldots,J(\pi^k;c_N)$ are independent random variables taking values in $[0,T\bar r]$ with mean $J(\pi^k;\cD)$. By Hoeffding's inequality \cite{hoeffding1963probability}, we obtain
\begin{align*}
\PP\Big(\big|\hat J_N(\pi^k)-J(\pi^k;\cD)\big|\geq s\Big)\leq2\exp\left(-\oldfrac{2Ns^2}{T^2\bar r^2}\right)
\end{align*}
for any $s>0$. Let us take the union bound over $k=1,\ldots,K$ and choose
\begin{align*}
s=s_N:=T\bar r\sqrt{\oldfrac{\log(2K/\delta)}{2N}}.
\end{align*}
Then with probability at least $1-\delta$, we have
\begin{align}\label{eq:uniform-dev}
\max_{1\leq k\leq K}\big|\hat J_N(\pi^k)-J(\pi^k;\cD)\big|\leq s_N.
\end{align}
Let us now fix $\pi\in\Pi$ and some $k$ with $d_\Pi(\pi,\pi^k)\leq\nu$. Whenever \eqref{eq:uniform-dev} holds, we have
\begin{align}\label{eq:uniform-dev-all}
\big|\hat J_N(\pi)-J(\pi;\cD)\big|
&\leq\big|\hat J_N(\pi)-\hat J_N(\pi^k)\big|+\big|\hat J_N(\pi^k)-J(\pi^k;\cD)\big|+\big|J(\pi^k;\cD)-J(\pi;\cD)\big|\nonumber\\
&\overset{(a)}{\leq}s_N+2\eta,
\end{align}
where (a) follows from \eqref{eq:uniform-dev} and Lemma~\ref{lem:lipschitz} with $T^2\bar r\nu/2=\eta$. Since \eqref{eq:uniform-dev-all} holds for any $\pi\in\Pi$, we obtain
\begin{align*}
J(\hat\pi_N;\cD)\overset{(b)}{\geq}\hat J_N(\hat\pi_N)-s_N-2\eta\overset{(c)}{\geq}\hat J_N(\pi)-s_N-2\eta\overset{(d)}{\geq}J(\pi;\cD)-2s_N-4\eta,
\end{align*}
where (b) and (d) follow from \eqref{eq:uniform-dev-all}, and (c) follows from the definition of $\hat\pi_N$. In addition, we have $4\eta=T\bar r/\sqrt N$ and, since $K\leq(1+8T\sqrt N)^{(|\cA|-1)|\cH|}$,
\begin{align*}
2s_N=T\bar r\sqrt{\oldfrac{2\log(2K/\delta)}{N}}\leq T\bar r\sqrt{\oldfrac{2(|\cA|-1)|\cH|\log\big(1+8T\sqrt N\big)+2\log(2/\delta)}{N}}.
\end{align*}
The desired claim then follows by taking the supremum over $\pi\in\Pi$.
\end{proof}

\begin{proof}[Proof of Theorem~\ref{thm:saa-wasserstein}]
Let $\widehat\cD_N=N^{-1}\sum_{i=1}^N\delta_{c_i}$ denote the empirical distribution of $c_1,\ldots,c_N$, which satisfies $\hat J_N(\pi)=\EE_{c\sim\widehat\cD_N}\sbr{J(\pi;c)}$. To compare $J(\pi;\cD)$ with $\hat J_N(\pi)$, we use the $1$-Wasserstein distance \cite{villani2009optimal} between distributions $P$ and $Q$ on $\cC$, given by
\begin{align*}
W_1(P,Q):=\inf_{\gamma\in\Gamma(P,Q)}\int\|c-c'\|_2\,\gamma(dc,dc'),
\end{align*}
where $\Gamma(P,Q)$ denotes the set of couplings of $P$ and $Q$. Let us first fix $\pi\in\Pi$. For any $\gamma\in\Gamma(\cD,\widehat\cD_N)$, we have
\begin{align*}
\big|J(\pi;\cD)-\hat J_N(\pi)\big|=\left|\int\big(J(\pi;c)-J(\pi;c')\big)\,\gamma(dc,dc')\right|\leq L_T\int\|c-c'\|_2\,\gamma(dc,dc'),
\end{align*}
where the inequality follows from \eqref{eq:context-lipschitz}. Taking the infimum over $\gamma$ and then the supremum over $\pi\in\Pi$, we obtain
\begin{align}\label{eq:wasserstein-uniform}
\sup_{\pi\in\Pi}\big|J(\pi;\cD)-\hat J_N(\pi)\big|\leq L_TW_1(\cD,\widehat\cD_N).
\end{align}
Consequently, for any $\pi\in\Pi$,
\begin{align*}
J(\hat\pi_N;\cD)\overset{(a)}{\geq}\hat J_N(\hat\pi_N)-L_TW_1(\cD,\widehat\cD_N)\overset{(b)}{\geq}\hat J_N(\pi)-L_TW_1(\cD,\widehat\cD_N)\overset{(c)}{\geq}J(\pi;\cD)-2L_TW_1(\cD,\widehat\cD_N),
\end{align*}
where (a) and (c) follow from \eqref{eq:wasserstein-uniform}, and (b) follows from the definition of $\hat\pi_N$. Taking the supremum over $\pi\in\Pi$, we obtain
\begin{align}\label{eq:wasserstein-saa}
\sup_{\pi\in\Pi}J(\pi;\cD)-J(\hat\pi_N;\cD)\leq2L_TW_1(\cD,\widehat\cD_N).
\end{align}

It remains to control $W_1(\cD,\widehat\cD_N)$ with high probability. Since $W_1(\cD,\widehat\cD_N)=0$ whenever $\operatorname{diam}(\cC)=0$, and $W_1$ is invariant under a common translation of both distributions, without loss of generality, let us assume $\operatorname{diam}(\cC)>0$ and $0\in\cC$, in which case $\EE_{c\sim\cD}\sbr{\|c\|_2^3}\leq\operatorname{diam}(\cC)^3$. By Theorem~1 of \cite{fournier2015rate} with transport cost $\|c-c'\|_2$ and moment of order $3$, there exists a constant $C_d$ depending only on $d$ such that
\begin{align*}
\EE\sbr{W_1(\cD,\widehat\cD_N)}\leq \oldfrac{C_d\operatorname{diam}(\cC)\log(2N)}{N^{\min\{1/2,1/d\}}},
\end{align*}
where the logarithmic factor is needed only for $d=2$, and the additional term of order $N^{-2/3}$ in \cite{fournier2015rate} is absorbed into $C_d$. In addition, let $\widehat\cD_N^{(i)}$ denote the empirical distribution obtained by replacing $c_i$ with some $c_i'\in\cC$. By coupling every unchanged sample with itself and $c_i$ with $c_i'$, we have
\begin{align*}
\big|W_1(\cD,\widehat\cD_N)-W_1(\cD,\widehat\cD_N^{(i)})\big|\leq W_1(\widehat\cD_N,\widehat\cD_N^{(i)})\leq\oldfrac{\|c_i-c_i'\|_2}{N}\leq\oldfrac{\operatorname{diam}(\cC)}{N}.
\end{align*}
Hence it follows from McDiarmid's inequality \cite{mcdiarmid1989bounded} that
\begin{align*}
\PP\Big(W_1(\cD,\widehat\cD_N)-\EE\sbr{W_1(\cD,\widehat\cD_N)}\geq s\Big)\leq\exp\left(-\oldfrac{2Ns^2}{\operatorname{diam}(\cC)^2}\right)
\end{align*}
for any $s>0$. Let us choose $s=\operatorname{diam}(\cC)\sqrt{\log(1/\delta)/(2N)}$. Then with probability at least $1-\delta$, we have
\begin{align}\label{eq:wasserstein-concentration}
W_1(\cD,\widehat\cD_N)\leq\operatorname{diam}(\cC)\left(\oldfrac{C_d\log(2N)}{N^{\min\{1/2,1/d\}}}+\sqrt{\oldfrac{\log(1/\delta)}{2N}}\right).
\end{align}
The desired claim then follows by combining \eqref{eq:wasserstein-saa} and \eqref{eq:wasserstein-concentration}.
\end{proof}

\begin{proof}[Proof of Proposition~\ref{prop:context-lipschitz}]
Let us fix $\pi\in\Pi$ and $c,c'\in\cC$, and denote by $z_t=(x_0,y_0,a_0,\ldots,a_{t-1},x_t,y_t)$ the full history up to period $t$ as in the proof of Lemma~\ref{lem:lipschitz}. Since $O_c=O$, we have $y_t=O(x_t)$ under both $c$ and $c'$. For $\tilde c\in\{c,c'\}$, we set $V^{\tilde c}_T\equiv0$ and define recursively, for $t=T-1,\ldots,0$,
\begin{align*}
V^{\tilde c}_t(z_t):=\sum_{a\in\cA}\pi_t(a\given h_t)\Big(r_{\tilde c}(x_t,a)+\int V^{\tilde c}_{t+1}(z_t,a,x',O(x'))\,P_{\tilde c}(dx'\given x_t,a)\Big).
\end{align*}
By induction, we have $0\leq V^{\tilde c}_t(z_t)\leq(T-t)\bar r$ and $J(\pi;\tilde c)=\int V^{\tilde c}_0\,d\mu_{\tilde c}$. In addition, for any distributions $P$ and $Q$ and any measurable $f$ taking values in $[0,M]$, we have
\begin{align}\label{eq:tv-integral}
\Big|\int f\,dP-\int f\,dQ\Big|\leq M\,d_{\mathrm{TV}}(P,Q).
\end{align} Let us denote $\xi_t=\sup_{z_t}\big|V^c_t(z_t)-V^{c'}_t(z_t)\big|$. Then for any $z_t$,
\begin{align*}
\big|V^c_t(z_t)-V^{c'}_t(z_t)\big|
&\leq\sum_{a\in\cA}\pi_t(a\given h_t)\Big(\big|r_c(x_t,a)-r_{c'}(x_t,a)\big|+\Big|\int\big(V^c_{t+1}-V^{c'}_{t+1}\big)(z_t,a,x',O(x'))\,P_c(dx'\given x_t,a)\Big|\\
&\qquad+\Big|\int V^{c'}_{t+1}(z_t,a,x',O(x'))\,\big(P_c(dx'\given x_t,a)-P_{c'}(dx'\given x_t,a)\big)\Big|\Big)\\
&\overset{(a)}{\leq}L\|c-c'\|_2+\xi_{t+1}+(T-t-1)\bar rL\|c-c'\|_2,
\end{align*}
where (a) follows from the conditions on $r_c$ and $P_c$ in Proposition~\ref{prop:context-lipschitz}, the definition of $\xi_{t+1}$, and \eqref{eq:tv-integral} with $f=V^{c'}_{t+1}$ and $M=(T-t-1)\bar r$. Taking the supremum over $z_t$ and summing up the resulting relation over $t=0,\ldots,T-1$, we obtain from $\xi_T=0$ that
\begin{align*}
\xi_0\leq\sum_{t=0}^{T-1}\big(1+(T-t-1)\bar r\big)L\|c-c'\|_2=LT\left(1+\oldfrac{\bar r(T-1)}{2}\right)\|c-c'\|_2.
\end{align*}
Consequently,
\begin{align*}
\big|J(\pi;c)-J(\pi;c')\big|
&\leq\Big|\int\big(V^c_0-V^{c'}_0\big)\,d\mu_c\Big|+\Big|\int V^{c'}_0\,d\mu_c-\int V^{c'}_0\,d\mu_{c'}\Big|
\overset{(b)}{\leq}\xi_0+T\bar rL\|c-c'\|_2\\
&\leq LT\left(1+\oldfrac{\bar r(T+1)}{2}\right)\|c-c'\|_2,
\end{align*}
where (b) follows from the condition on $\mu_c$ in Proposition~\ref{prop:context-lipschitz} and \eqref{eq:tv-integral} with $f=V^{c'}_0$ and $M=T\bar r$. The proof is then completed.
\end{proof}

\section{Implementation Details}\label{app:details}

\subsection{Patient Model}

In the public code and data of \cite{zhang2022evolution}, the PSA records and the fitted initial cell numbers are available for 32 patients, namely 17 patients of the pilot clinical trial and 15 patients of the contemporaneous cohort who received continuous treatment with abiraterone as the standard of care. Since the records of 25 patients begin before the first dose of abiraterone, we follow the public code to reconstruct the daily treatment indicator of each patient from the recorded treatment history, integrate \eqref{eq:lv} with $(\alpha,\beta)=(6,8)$ from the fitted initial cell numbers along the reconstructed indicator, and take the cell numbers on the day of the first dose as the fitted initial state $(S_0,R_0)$. Throughout, we integrate \eqref{eq:lv} by the classical fourth-order Runge-Kutta method with step size $1$ in $\tau$, and truncate $S$ and $R$ to $[0,\kappa]$ after every step, as in the public code. For all 32 patients, we reproduce the trajectories computed by the public code up to a relative error below $10^{-13}$.

\subsection{Training Populations}

For $\Dcal$, we select one of the 32 fitted initial states uniformly at random, multiply $S_0$ and $R_0$ by independent lognormal factors with log-standard deviation $0.2$, and set $(\alpha,\beta)=(6,8)$. For $\Dwide$, we draw $\alpha\sim U[2,8]$, $\beta\sim U[6,10]$, $B_0\sim U[0.1\kappa,0.8\kappa]$, and the initial resistant fraction $R_0/B_0$ log-uniformly on $[0.001,0.5]$. The range of $\alpha$ extends from the value assumed in the earlier model of \cite{zhang2017integrating} to above the fitted value $6$, and the range of $\beta$ lies within 25\% of the fitted value $8$. We take the ranges of $B_0$ and $R_0/B_0$ from the priors of \cite{browning2026adaptive}, with the upper end of $B_0$ lowered from $\kappa$ to $0.8\kappa$ to keep the progression threshold $1.2B_0$ below the carrying capacity, and with $R_0/B_0$ drawn log-uniformly to adequately represent patients with few resistant cells. The virtual test population is drawn from $\Dcal$ with a fixed seed.

\subsection{Generating Kernels}

We take as $\Qnom$ the distribution of $m=(m_\alpha,m_\beta)$ with $\log m_\alpha\sim N(0,0.55^2)$ and $\log m_\beta\sim N(0,0.125^2)$ independent and truncated at two standard deviations. Under $\Qnom$, the competitive effect $m_\alpha\alpha$ at $\alpha=6$ ranges approximately over $[2,18]$, whose lower end corresponds to the value assumed in the earlier model of \cite{zhang2017integrating} and whose upper end remains within the range of competition coefficients reported among species in nature \cite{zhang2022evolution}. We draw the training kernels and the held-out kernels of Section~\ref{sec:res-kernel} independently from $\Qnom$ with different random seeds.

\subsection{Policy Network and Training}

The input of the policy network at visit $t$ consists of six features of $h_t$, namely $y_t$, the previous action, the elapsed time, the fraction of treated months, the time since the last change of action, and an indicator of forced treatment. We map these features through a linear layer with 64 units and $\tanh$ activation to a GRU with 64 hidden units, followed by linear heads for the action logits and the value. At visits with forced treatment, we exclude the action $a_t=0$ by masking its logit. The loss of PPO combines the clipped surrogate objective with clipping parameter $0.2$, the value loss with coefficient $0.5$, and the entropy bonus with coefficient $0.01$, where the advantages are computed by generalized advantage estimation with parameter $0.95$. In each update, we minimize the loss by the Adam optimizer with learning rate $3\times10^{-4}$ over four epochs with four minibatches, clip the gradient norm at $0.5$, and stop the update once the approximate Kullback-Leibler divergence between the new and old policies exceeds $0.03$. In training, we divide the reward $r_c$ of Section~\ref{sec:cmdp} by $T$. For every combination of policy type, training radius, and training population, we train three policies with the random seeds 11, 23, and 37, and report each policy after the final update, without selecting checkpoints on test data.

\subsection{Kernel Adversary}

We parameterize the kernel adversary by a network of the same architecture as the policy network, and train it by PPO with the same hyperparameters, using the negative of the reward of the controller. In solving \eqref{eq:kernel-robust}, we alternate between four adversary updates with the policy fixed and one policy update with the adversary fixed. The adversary selects the generating kernel of month $t$ before observing $a_t$, based on the observation $y_t$, the previous action $a_{t-1}$, the elapsed time $t/T$, and the one-hot encoding of the kernel it proposed in the previous month. We use its output head to parameterize a categorical distribution over the $K=32$ generating kernels. In accordance with \eqref{eq:mixture}, the patient evolves under the proposed kernel with probability $\rho$ and under a uniformly drawn generating kernel with probability $1-\rho$, and the adversary does not observe which of the two occurs. We initialize the output head of the adversary at zero, and hence the adversary proposes the kernels uniformly at random at initialization.

\subsection{Evaluation}

At evaluation, the learned policies select their most likely action at every visit. All methods are evaluated on the same test patients with the same random numbers, and on the fitted patients, each patient is evaluated under 50 independent replicates of the random perturbations. For every policy $\pi$ and evaluation radius $\rho>0$, we estimate the robust value $\min_\sigma J(\pi,\sigma;\cD)$ of $\pi$ by training a new best-response adversary against $\pi$, held fixed, for 2{,}000 PPO updates, on patients drawn from the calibrated population $\Dcal$ for the virtual test population and on the fitted patients for the fitted test population, and average the performance of the policy against this adversary, which samples its actions, over 32 independent repetitions. It should be noted that the best-response adversary is computed numerically, and the value attained against it hence provides an upper estimate of the exact robust value. AT50 and continuous treatment are evaluated against their own best-response adversaries. Under shifts in the dynamics, we compute $G^\on(c)$ under the same adversary, held fixed, acting on the continuous-treatment trajectory, with common random numbers.

\FloatBarrier
\section{Complete Numerical Results}\label{app:results}

To complement Section~\ref{sec:results}, we report the complete values underlying its findings. The comparison across levels of initial resistance in Section~\ref{sec:res-main} extends to both test populations, on which nominal RL improves on AT50 in every tertile of the initial resistant fraction, while the harm rate of AT50 is highest among patients with few resistant cells (Table~\ref{tab:tertiles}). The per-seed values of the learned policies under the conditions of Table~\ref{tab:cross} agree closely for nominal RL, random-perturbation RL, and kernel-robust RL in the nominal model and at $\rho=0$ (Table~\ref{tab:seeds}), and on the fitted patients, the three seeds of nominal RL attain 48.31, 48.66, and 48.31 months in the nominal model. The harm rates discussed in Section~\ref{sec:res-gen} for the policies trained on the broad population are given in Table~\ref{tab:dyn-real32}, together with their mean progression-free times under shifts in the patient dynamics on the fitted patients.

\begin{table}[!htb]
\centering
\caption{Mean progression-free time (months) on the virtual test population in the nominal model and under random perturbations of PSA with $\varepsilon=0.2$, with harm rates (\%) in parentheses.}
\label{tab:obs}
\small
\begin{tabular}{lcc}
\toprule
Method & Nominal model & $\varepsilon=0.2$\\
\midrule
Continuous treatment & 15.81 & 15.81\\
AT50 & 29.58 (28.4) & 16.46 (43.8)\\
\midrule
Nominal RL & \textbf{50.35} (0.2) & 32.96 (14.4)\\
Random-perturbation RL & 45.94 (0.1) & \textbf{44.49} (2.0)\\
\bottomrule
\end{tabular}

\end{table}

\begin{table}[!htb]
\centering
\caption{Cross-evaluation on the virtual test population (mean progression-free time in months). The columns $\rho\in\{0,0.25,1\}$ denote evaluation radii on the training kernels, and the column $\varepsilon=0.2$ denotes random perturbations of PSA.}
\label{tab:cross}
\small
\begin{tabular}{lccccc}
\toprule
 & & \multicolumn{3}{c}{Shifts in the dynamics} & Errors in PSA\\
\cmidrule(lr){3-5}\cmidrule(lr){6-6}
Method & Nominal model & $\rho=0$ & $\rho=0.25$ & $\rho=1$ & $\varepsilon=0.2$\\
\midrule
Continuous treatment & 15.81 & 16.11 & 15.68 & 12.72 & 15.81\\
AT50 & 29.58 & 19.04 & 17.92 & 11.65 & 16.46\\
\midrule
Nominal RL & \textbf{50.35} & 45.74 & 39.12 & 17.46 & 32.96\\
Random-perturbation RL & 45.94 & 47.15 & 40.91 & 18.80 & \textbf{44.49}\\
Kernel-robust RL, $\rho_{\mathrm{tr}}=0.25$ & 48.77 & \textbf{53.31} & \textbf{46.57} & \textbf{23.26} & 38.65\\
\bottomrule
\end{tabular}

\end{table}

{\small
\def\LTcap{\caption{Mean progression-free time (months) by tertile of the initial resistant fraction $R_0/B_0$ in the nominal model, with harm rates (\%) in parentheses. Tertiles are computed separately for each test population.}\label{tab:tertiles}}
\begin{longtable}{lccc}
\LTcap\\
\toprule
Initial resistant fraction & Nominal RL & AT50 & Continuous treatment\\
\midrule
\endfirsthead
\toprule
Initial resistant fraction & Nominal RL & AT50 & Continuous treatment\\
\midrule
\endhead
\bottomrule
\endfoot
\bottomrule
\endlastfoot
\multicolumn{4}{l}{\emph{Virtual test population}}\\
Low & 60.81 (0.4) & 37.28 (39.9) & 23.92\\
Middle & 54.70 (0.1) & 31.56 (34.5) & 15.89\\
High & 35.57 (0.0) & 19.93 (10.8) & 7.63\\
\midrule
\multicolumn{4}{l}{\emph{Fitted patients}}\\
Low & 62.30 (0.0) & 31.40 (50.0) & 22.50\\
Middle & 54.55 (0.0) & 28.91 (27.3) & 14.27\\
High & 29.70 (0.0) & 16.27 (9.1) & 7.91\\
\end{longtable}
}

{\small
\setlength{\tabcolsep}{4.5pt}
\def\LTcap{\caption{Per-seed mean progression-free time (months) on the virtual test population for the conditions of Table~\ref{tab:cross}.}\label{tab:seeds}}
\begin{longtable}{llccccc}
\LTcap\\
\toprule
Method & Seed & Nominal model & $\rho=0$ & $\rho=0.25$ & $\rho=1$ & $\varepsilon=0.2$\\
\midrule
\endfirsthead
\toprule
Method & Seed & Nominal model & $\rho=0$ & $\rho=0.25$ & $\rho=1$ & $\varepsilon=0.2$\\
\midrule
\endhead
\bottomrule
\endfoot
\bottomrule
\endlastfoot
Nominal RL & 11 & 50.33 & 46.00 & 39.91 & 14.87 & 34.66\\
 & 23 & 50.32 & 46.25 & 39.56 & 15.97 & 31.85\\
 & 37 & 50.40 & 44.96 & 37.89 & 21.53 & 32.38\\
\midrule
Random-perturbation RL & 11 & 45.76 & 47.58 & 41.47 & 17.11 & 44.50\\
 & 23 & 46.12 & 47.35 & 41.51 & 18.99 & 44.46\\
 & 37 & 45.95 & 46.52 & 39.75 & 20.29 & 44.51\\
\midrule
Kernel-robust RL, $\rho_{\mathrm{tr}}=0.1$ & 11 & 47.95 & 53.58 & 46.15 & 20.43 & 38.86\\
 & 23 & 48.57 & 53.45 & 46.73 & 25.28 & 36.31\\
 & 37 & 47.88 & 53.30 & 46.58 & 24.42 & 39.72\\
\midrule
Kernel-robust RL, $\rho_{\mathrm{tr}}=0.25$ & 11 & 48.71 & 53.48 & 46.61 & 23.12 & 38.15\\
 & 23 & 48.34 & 53.54 & 46.57 & 25.19 & 38.11\\
 & 37 & 49.25 & 52.90 & 46.54 & 21.48 & 39.67\\
\midrule
Kernel-robust RL, $\rho_{\mathrm{tr}}=0.5$ & 11 & 48.66 & 53.02 & 46.11 & 24.66 & 38.27\\
 & 23 & 48.13 & 53.19 & 46.93 & 25.77 & 36.10\\
 & 37 & 48.56 & 52.98 & 46.48 & 24.79 & 36.38\\
\end{longtable}
}

{\small
\setlength{\tabcolsep}{4pt}
\def\LTcap{\caption{Shifts in the patient dynamics on the 32 fitted patients, policies trained on the broad population. Notation as in Table~\ref{tab:dyn-virtual}.}\label{tab:dyn-real32}}
\begin{longtable}{lcccccc}
\LTcap\\
\toprule
 & & \multicolumn{3}{c}{Kernel-robust RL} & & \\
\cmidrule(lr){3-5}
Condition & Nominal RL & $\rho_{\mathrm{tr}}=0.1$ & $\rho_{\mathrm{tr}}=0.25$ & $\rho_{\mathrm{tr}}=0.5$ & AT50 & Continuous treatment\\
\midrule
\endfirsthead
\toprule
 & & \multicolumn{3}{c}{Kernel-robust RL} & & \\
\cmidrule(lr){3-5}
Condition & Nominal RL & $\rho_{\mathrm{tr}}=0.1$ & $\rho_{\mathrm{tr}}=0.25$ & $\rho_{\mathrm{tr}}=0.5$ & AT50 & Continuous treatment\\
\midrule
\endhead
\bottomrule
\endfoot
\bottomrule
\endlastfoot
Nominal model & 48.59 (0.0) & 45.58 (0.0) & 46.03 (0.0) & 45.77 (0.0) & 25.34 (28.1) & 14.66\\
\midrule
\multicolumn{7}{l}{\emph{Training kernels}}\\
$\rho=0$ & 46.14 (7.1) & 53.34 (0.7) & 52.84 (0.9) & 53.31 (0.9) & 17.82 (45.4) & 14.97\\
$\rho=0.1$ & 42.08 (10.3) & 50.15 (0.9) & 50.03 (1.1) & 50.19 (1.4) & 17.72 (46.1) & 14.83\\
$\rho=0.25$ & 33.99 (15.3) & 44.32 (1.3) & 43.31 (2.9) & 44.59 (2.0) & 16.91 (48.3) & 14.37\\
$\rho=0.5$ & 24.40 (24.2) & 34.51 (5.8) & 33.13 (8.4) & 34.81 (6.2) & 14.18 (53.9) & 12.85\\
$\rho=1$ & 13.64 (47.3) & 20.03 (16.7) & 20.79 (12.3) & 21.18 (8.4) & 10.07 (79.3) & 10.90\\
\midrule
\multicolumn{7}{l}{\emph{Held-out kernels}}\\
$\rho=0$ & 47.29 (8.5) & 56.35 (0.9) & 55.82 (1.1) & 55.95 (1.3) & 18.38 (47.9) & 15.65\\
$\rho=0.1$ & 43.77 (10.7) & 52.59 (1.0) & 51.99 (1.3) & 52.25 (1.5) & 18.52 (46.9) & 15.46\\
$\rho=0.25$ & 34.36 (16.7) & 45.20 (1.3) & 43.99 (2.4) & 45.22 (1.5) & 17.24 (50.5) & 14.91\\
$\rho=0.5$ & 24.90 (25.8) & 33.93 (4.1) & 33.11 (4.7) & 34.60 (3.2) & 14.92 (56.3) & 13.76\\
$\rho=1$ & 18.79 (13.3) & 20.86 (0.0) & 20.68 (0.0) & 20.74 (0.0) & 10.89 (63.6) & 11.64\\
\end{longtable}
}

\FloatBarrier
\section{Alternative Definition of Progression}\label{app:pstd}

To justify the definition of progression in Section~\ref{sec:cmdp}, we compare it with an alternative definition, under which progression is declared only when the tumor grows under treatment as in \cite{browning2026adaptive}. Specifically, we declare progression at the first visit $t$ with $a_{t-1}=1$, $B_t\geq1.2B_0$, and $B_t>B_{t-1}$, i.e., when the tumor exceeds the threshold during treatment and is still growing. Growth during a treatment break is then never penalized, and the forced restart at $y_t\geq0$ occurs below the threshold $1.2B_0$. We report the results in the nominal model under this definition in Table~\ref{tab:pstd}, together with the forced-only rule, which withholds treatment whenever permitted, i.e., sets $a_t=1$ if and only if $t=0$ or $y_t\geq0$. All three seeds of nominal RL, trained under this definition, attain 53.79 months on the virtual test population and 53.75 months on the fitted patients, which coincide with the values of the forced-only rule, while AT50 attains 52.60 and 53.06 months. Consequently, under the alternative definition, the benefit of adapting the treatment to PSA beyond the forced-treatment rule cannot be assessed.

{\small
\def\LTcap{\caption{Mean progression-free time (months) under the alternative definition of progression in the nominal model. Forced only denotes the rule that treats only when forced.}\label{tab:pstd}}
\begin{longtable}{lccccc}
\LTcap\\
\toprule
Population & Continuous & AT50 & Forced only & Nominal RL & Seeds 11 / 23 / 37\\
\midrule
\endfirsthead
\toprule
Population & Continuous & AT50 & Forced only & Nominal RL & Seeds 11 / 23 / 37\\
\midrule
\endhead
\bottomrule
\endfoot
\bottomrule
\endlastfoot
Virtual & 15.81 & 52.60 & 53.79 & 53.79 & 53.79 / 53.79 / 53.79\\
Fitted patients & 14.66 & 53.06 & 53.75 & 53.75 & 53.75 / 53.75 / 53.75\\
\end{longtable}
}

\end{document}